\def\eqref#1{equation~\ref{#1}}
\def\1{\bm{1}}
\DeclareMathAlphabet{\mathsfit}{\encodingdefault}{\sfdefault}{m}{sl}
\SetMathAlphabet{\mathsfit}{bold}{\encodingdefault}{\sfdefault}{bx}{n}
\newtheorem{assumption}{Assumption}
\newtheorem{theorem}{Theorem}
\newtheorem{lemma}{Lemma}
\newtheorem{proposition}{Proposition}
\title{Handling Distribution Shifts on Graphs: An Invariance Perspective}
\author{Qitian Wu\thanks{This work was done during authors' internship at AWS Shanghai AI Lab.} \\
Shanghai Jiao Tong University\\
\texttt{echo740@sjtu.edu.cn} \\
\And
Hengrui Zhang$^*$ \\
University of Illinois at Chicago  \\
\texttt{hzhan55@uic.edu} \\
\AND
Junchi Yan\thanks{Corresponding author is Junchi Yan. The SJTU authors are also with MoE Key Lab of Artificial Intelligence, Shanghai Jiao Tong University.} 
\\
Shanghai Jiao Tong University\\
\texttt{yanjunchi@sjtu.edu.cn} \\
\And
~~~~~~~~~~~~~~~~~~~~~~~~~~~~~~~~~~~~~~~~~~David Wipf \\
~~~~~~~~~~~~~~~~~~~~~~~~~~~~~~~~~~~~~~~~~~Amazon\\
~~~~~~~~~~~~~~~~~~~~~~~~~~~~~~~~~~~~~~~~~~\texttt{daviwipf@amazon.com} \\
}
\newcommand{\model}{\textsc{Eerm}\xspace}
\newcommand{\base}{\textsc{Erm}\xspace}
\begin{document}

\maketitle

\begin{abstract}
    There is increasing evidence suggesting neural networks' sensitivity to distribution shifts, so that research on out-of-distribution (OOD) generalization comes into the spotlight. Nonetheless, current endeavors mostly focus on Euclidean data, and its formulation for graph-structured data is not clear and remains under-explored, given two-fold fundamental challenges: 1) the inter-connection among nodes in one graph, which induces non-IID generation of data points even under the same environment, and 2) the structural information in the input graph, which is also informative for prediction. In this paper, we formulate the OOD problem
    on graphs and develop a new invariant learning approach, Explore-to-Extrapolate Risk Minimization (EERM), that facilitates graph neural networks to leverage invariance principles for prediction. EERM resorts to multiple context explorers (specified as graph structure editers in our case) that are adversarially trained to maximize the variance of risks from multiple virtual environments. Such a design enables the model to extrapolate from a single observed environment which is the common case for node-level prediction. We prove the validity of our method by theoretically showing its guarantee of a valid OOD solution and further demonstrate its power on various real-world datasets for handling distribution shifts from artificial spurious features, cross-domain transfers and dynamic graph evolution\footnote{The implementation is public available at \url{https://github.com/qitianwu/GraphOOD-EERM}.}.
\end{abstract}

\section{Introduction}\label{sec-intro}
\vspace{-6pt}
As the demand for handling in-the-wild unseen instances draws increasing concerns, out-of-distribution (OOD) generalization \citep{ood-classic-1, ood-classic-2, ood-classic-3, invariant-trans} occupies a central role in the ML community. Yet, recent evidence suggests that deep neural networks can be sensitive to distribution shifts, exhibiting unsatisfactory performance within new environments, e.g., \cite{spurious-background1, spurious-background3, spurious-background4, spurious-background5}. A more concerning example is that a model for COVID-19 detection exploits undesired `shortcuts' from data sources (e.g., hospitals) to boost training accuracy \citep{spurious-background2}. 

Recent studies of the OOD generalization problem like \cite{invariant-causal, ood-env-1, invariant-trans,IRM} treat the cause of distribution shifts between training and testing data as a potential unknown environmental variable $\mathbf e$. Assuming that the goal is to predict target label $\mathbf y$ given associated input $\mathbf x$, the environmental variable would impact the underlying data generating distribution $p(\mathbf x, \mathbf y| \mathbf e) = p(\mathbf x|\mathbf e)p(\mathbf y|\mathbf x, \mathbf e)$. With $\mathcal E$ as the support of environments, $f(\cdot)$ as a prediction model and $l(\cdot, \cdot)$ as a loss function, the OOD problem could be formally represented as
\begin{equation}\label{eqn-ood-general}
    \min_{f} \max_{e \in \mathcal E} \mathbb E_{(\mathbf x, y)\sim p(\mathbf x, \mathbf y|\mathbf e=e)} [l(f(\mathbf x), y) | e].
\end{equation}
Such a problem is hard to solve since the observations in training data cannot cover all the environments in practice. Namely, the actual demand is to generalize a model trained with data from $p(\mathbf x, \mathbf y|\mathbf e = e_1)$ to new data from $p(\mathbf x, \mathbf y|\mathbf e = e_2)$. Recent research opens a new possibility via learning domain-invariant models \citep{IRM} under a cornerstone data-generating assumption: there exists a portion of information in $\mathbf x$ that is invariant for prediction on $\mathbf y$ across different environments. Based on this, the key idea is to learn a \emph{equipredictive} representation model $h$ that gives rise to equal conditional distribution $p(\mathbf y|h(\mathbf x), \mathbf e=e)$ for $\forall e \in \mathcal E$. The implication is that such a representation $h(\mathbf x)$ will bring up equally (optimal) performance for a downstream classifier under arbitrary environments. The model $\hat p(\mathbf y|\mathbf x)$ with such a property is called as invariant model/predictor. Several up-to-date studies develop new objective designs and algorithms for learning invariant models, showing promising power for tackling OOD generalization \citep{invariant-ration,invariant-game,invariant-rex,invariant-heter,invariant-envinf,invariant-imim}.

While the OOD problem is extensively explored on Euclidean data (e.g., images), there are few existing works investigating the problem concerning graph-structured data, despite that distribution shifts widely exist in real-world graphs. For instance, in citation networks, the distributions for paper citations (the input) and subject areas/topics (the label) would go through significant change as time goes by \citep{ogb}. In social networks, the distributions for users' friendships (the input) and their activity (the label) would highly depend on when/where the networks are collected \citep{example-social}. In financial networks \citep{EvolveGCN}, the payment flows between transactions (the input) and the appearance of illicit transactions (the label) would have strong correlation with some external contextual factors (like time and market). In these cases, neural models built on graph-structured data, particularly, Graph Neural Networks (GNNs) which are the common choice, need to effectively deal with OOD data during test time.
Moreover, as GNNs have become popular and easy-to-implement tools for modeling relational structures in broad AI areas (vision, texts, audio, etc.), enhancing its robustness to distribution shifts is a pain point for building general AI systems, especially applied to high-stake applications like autonomous driving \citep{app-selfdrive}, medical diagnosis \citep{app-medical}, criminal justice \citep{app-justice}, etc.

Nonetheless, compared with images or texts, graph-structured data has two fundamental differences. First, many graph-related problems (like the situations mentioned above) involve prediction tasks for each individual node, in which case the data points are inter-connected via graph structure that induces non-independent and non-identically distributed nature in data generation even within the same environment. Second, apart from node features, the structural information also plays a role for prediction and would affect how the model generalizes under environment variation. These differences bring up unique technical challenges for handling distribution shifts on graphs.

In this paper, we endeavor to 1) formulate the OOD problem for node-level tasks on graphs, 2) develop a new learning approach based on an invariance principle, 3) provide theoretical results to dissect its rationale, and 4) design comprehensive experiments to show its practical efficacy. Concretely:

\textbf{1.} To accommodate the non-IID nature of nodes in a graph, we fragment a graph into a set of ego-graphs for centered nodes and decompose the data-generating process into: 1) sampling a whole input graph and 2) sampling each node's label conditioned on ego-graph. Based on this, we can inherit the spirit of Eq.~\ref{eqn-ood-general} to formulate the OOD problem for node-level tasks over graphs  (see Section~\ref{sec-background-1}).

\textbf{2.} To account for structural information, we extend the invariance principle with recursive computation on the induced BFS trees of ego-graphs. Then, for out-of-distribution generalization on graphs, we devise a new learning approach, entitled \emph{Explore-to-Extrapolate Risk Minimization}, that aims GNNs at minimizing the mean and variance of risks from multiple environments that are simulated by adversarial context generators (i.e., graph editers), as shown in Fig.~\ref{fig-main}(a) (see Section~\ref{sec-background-2} and \ref{sec-method}).

\textbf{3.} To shed more insights on the rationales of the proposed approach and its relationship with the formulated OOD problem, we 
prove that our objective can guarantee a valid solution for the formulated OOD problem given some mild conditions and furthermore, an upper bound on the OOD error can be effectively controlled when minimizing the training error (see Section~\ref{sec-theory}).

\textbf{4.} To evaluate the approach, we design a comprehensive set of experiments on diverse real-world node-level prediction datasets that entail distribution shifts from artificial spurious features, cross-domain transfers and dynamic graph evolution. We also apply our approach to distinct GNN backbones (GCN, GAT, GraphSAGE, GCNII and GPRGNN), and the results show that it consistently outperforms standard empirical risk minimization with promising improvements on OOD data (see Section~\ref{sec-exp}). 

\begin{figure}
    \centering
    \includegraphics[width=0.95\textwidth]{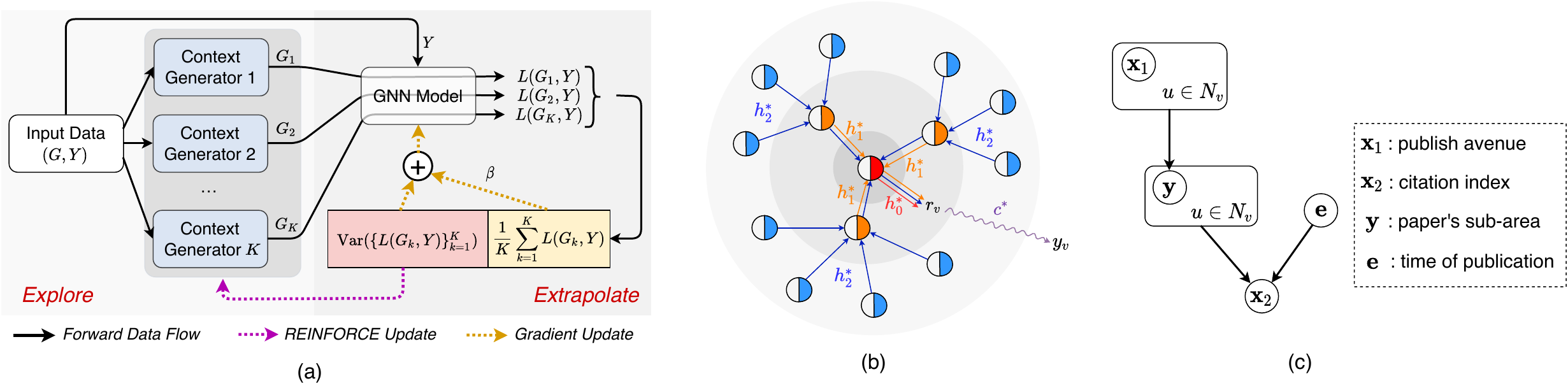}
	\vspace{-6pt}
    \caption{(a) The proposed approach \emph{Explore-to-Extrapolate Risk Minimization} which entails $K$ context generators that generate graph data of different (virtual) environments based on input data from a single (real) environment. The GNN model is updated via gradient descent to minimize a weighted combination of mean and variance of risks from different environments, while the context generators are updated via REINFORCE to maximize the variance loss. (b) Illustration for our Assumption~\ref{assump-1}. 
    (c) The dependence among variables in the motivating example in Section~\ref{sec-method-1}.
    }
    \label{fig-main}
    \vspace{-10pt}
\end{figure}

\section{Problem Formulation}\label{sec-problem}
\vspace{-6pt}
In this section, we present our formulation for the OOD problem on graphs. All the random variables are denoted as bold letters while the corresponding realizations are denoted as thin letters. 

\subsection{Out-of-distribution Problem for Graph-Structured Data}\label{sec-background-1}

\vspace{-6pt}
An input graph $G=(A, X)$ contains two-fold information\footnote{Our formulation and method can be trivially extended to cover edge features which we omit here for brevity.}: an adjacency matrix $A = \{a_{vu}| v, u\in V\}$ and node features $X=\{x_v| v\in V\}$ where $V$ denotes node set. Apart from these, each node in the graph has a label, which can be represented as a vector $Y=\{y_v| v\in V\}$. We define $\mathbf G$ as a random variable of input graphs and $\mathbf Y$ as a random variable of node label vectors. Such a definition takes a global view and treat the input graph as a whole. Based on this, one can adapt the definition of general OOD problem Eq.~\ref{eqn-ood-general} via instantiating the input as $\mathbf G$ and the target as $\mathbf Y$, and then the data generation can be characterized as $p(\mathbf G, \mathbf Y | \mathbf e) = p(\mathbf G | \mathbf e)p(\mathbf Y| \mathbf G, \mathbf e)$ where $\mathbf e$ is a random variable of environments that is a latent variable and impacts data distribution.

However, the above definition makes little sense in node-level problems where in most cases there is a single input graph that contains a massive number of nodes. To make the problem-solving reasonable, we instead take a local view and investigate each node's ego-graph that has influence on the centered node. Assume $\mathbf v$ as a random variable of nodes. We define node $v$'s $L$-hop neighbors as $N_v$ (where $L$ is an arbitrary integer) and the nodes in $N_v$ form an ego-graph $G_v$ which consists of a (local) node feature matrix $X_v = \{x_u| u\in N_v \}$ and a (local) adjacency matrix $A_v = \{a_{uw}|u,w \in N_v\}$. Use $\mathbf G_{\mathbf v}$ as a random variable of ego-graphs\footnote{We use a subscript $\mathbf v$ here to remind that it is an ego-graph from the view of a target node.} whose realization is $G_v=(A_v, X_v)$. Besides, we define $\mathbf y$ as a random variable of node labels. In this way, we can fragment a whole graph as a set of instances $\{(G_v, y_v)\}_{v\in V}$ where $G_v$ denotes an input and $y_v$ is a target. Notice that the ego-graph can be seen as a Markov blanket for the centered node, so the conditional distribution $p(\mathbf Y| \mathbf G, \mathbf e)$ can be decomposed as a product of $|V|$ independent and identical marginal distributions $p(\mathbf y|\mathbf G_{\mathbf v}, \mathbf e)$. 

Therefore, the data generation of $\{(G_v, y_v)\}_{v\in V}$ from a distribution $p(\mathbf G, \mathbf Y | \mathbf e)$ can be considered as a two-step procedure: 1) the entire input graph is generated via $G \sim p(\mathbf G|\mathbf e)$ which can then be fragmented into a set of ego-graphs $\{G_v\}_{v\in V}$; 2) each node's label is generated via $y\sim p(\mathbf y|\mathbf G_{\mathbf v}=G_v, \mathbf e)$. Then the OOD node-level prediction problem can be formulated as: given training data $\{G_v, y_v\}_{v\in V}$ from $p(\mathbf G, \mathbf Y|\mathbf e = e)$, the model needs to handle testing data $\{G_v, y_v\}_{v\in V'}$ from a new distribution $p(\mathbf G, \mathbf Y | \mathbf e = e')$. Denote $\mathcal E$ as the support of environments, $f$ as a predictor model with $\hat y = f(G_v)$ and $l(\cdot, \cdot)$ as a loss function. More formally, the OOD problem can be written as:
\begin{equation}\label{eqn-ood}
    \min_f \max_{e\in \mathcal E} \mathbb E_{G\sim p(\mathbf G|\mathbf e=e)} \left [\frac{1}{|V|}\sum_{v\in V}\mathbb E_{y\sim p(\mathbf y|\mathbf G_{\mathbf v}=G_v, \mathbf e = e)} [l(f(G_v), y)] \right].
\end{equation}
We remark that the first-step sampling $G\sim p(\mathbf G|\mathbf e=e)$ can be ignored since in most cases one only has a single input graph in the context of node-level prediction tasks.

\subsection{Invariant Features for Node-Level Prediction on Graphs}\label{sec-background-2}

\vspace{-6pt}
To solve the OOD problem Eq.~\ref{eqn-ood} is impossible without any prior domain knowledge or structural assumptions since one only has access to data from limited environments in the training set. Recent studies \citep{invariant-causal,IRM} 
propose to learn invariant predictor models which resorts to an assumption for data-generating process: the input instance contains a portion of features (i.e., invariant features) that 1) contributes to sufficient predictive information for the target and 2) gives rise to equally (optimal) performance of the downstream classifier across environments. 

With our definition in Section~\ref{sec-background-1}, for node-level prediction on graphs, each input instance is an ego-graph $G_v$ with target label $y_v$. It seems not straightforward for \emph{how to define invariant features on graphs} given two observations: 1) the ego-graph possesses a hierarchical structure for associated nodes (i.e., $G_v$ induces a BFS tree rooted at $v$ where the $l$-th layer contains the $l$-order neighbored nodes $N_v^{(l)}$) and 2) the nodes in each layer are permutation-invariant and variable-length. Inspired by WL test~\cite{WL-test}, we extend the invariance assumption~\citep{invariant-causal, invariant-trans, IRM} to accommodate structural information in graph data:
\begin{assumption}\label{assump-1}
(Invariance Property) Assume input feature dimension as $d_0$. There exists a sequence of (non-linear) functions $\{h_l^*\}_{l=0}^L$ where $h_l^*: \mathbb R^{d_0} \rightarrow \mathbb R^{d}$ and a permutation-invariant function $\Gamma: \mathbb R^{d^m} \rightarrow \mathbb R^{d}$, which gives a node-level readout $r_v = r_v^{(L)}$ that is calculated in a recursive way: $r_u^{(l)} = \Gamma \{r_w^{(l-1)} |w\in N_u^{(1)} \cup \{u\}\} $ for $l=1, \cdots, L$ and $r_u^{(0)} = h_l^*(x_u)$ if $u\in N_v^{(l)}$. Denote $\mathbf r$ as a random variable of $r_v$ and it satisfies 1) (Invariance condition): $p(\mathbf y| \mathbf r, \mathbf e) = p(\mathbf y| \mathbf r)$, and 2) (Sufficiency condition): $\mathbf y = c^*(\mathbf r) + \mathbf n$, where $c^*$ is a non-linear function, $\mathbf n$ is an independent noise.
\end{assumption}
A more intuitive illustration for the above computation is presented in Fig.~\ref{fig-main}(b). The node-level readout $r_v$ aggregates the information from neighbored nodes recursively along the structures of BFS tree given by $G_v$. Essentially, the above definition assumes that in each layer the neighbored nodes contain a portion of causal features that contribute to stable prediction for $\mathbf y$ across different $\mathbf e$. Such a definition possesses two merits: 1) the (non-linear) transformation $h_l^*$ can be different across layers, and 2) for arbitrary node $u$ in the original graph $G$, its causal effect on distinct centered nodes $v$ could be different dependent on its relative position in the ego-graph $G_v$. Therefore, this formulation gives rise to enough flexibility and capacity for modeling on graph data.

\section{Methodology}\label{sec-method}

We next present our solution for the challenging OOD problem. Before going into the formal method, we first introduce a motivating example based on Assumption~\ref{assump-1} to provide some high-level intuition.

\subsection{Motivating Example}\label{sec-method-1}

We consider a linear toy example and assume 1-layer graph convolution for illustration. Namely, the ego-graph $G_v$ (and $N_v$) only contains the centered node and its 1-hop neighbors. We simplify the $h^*$ and $c^*$ in Assumption~\ref{assump-1} as identity mappings and instantiate $\Gamma$ as a mean pooling function. Then we assume 2-dim node features $x_v = [x_v^1, x_v^2]$ and
\begin{equation}
    y_v = \frac{1}{|N_v|} \sum_{u\in N_v} x_u^1 + n_v^1, \quad x_v^2 = \frac{1}{|N_v|} \sum_{u\in N_v} y_u + n_v^2 + \epsilon,
\end{equation}
where $n_v^1$ and $n_v^2$ are independent standard normal noise and $\epsilon$ is a random variable with zero mean and non-zero variance dependent on environment $e$. In Fig.~\ref{fig-main}(c) we show the dependency among these random variables in a graphical representation and instantiate them in an example of citation networks, where a paper's published avenue is an invariant feature for predicting the paper's sub-area while its citation index (a spurious feature) is affected by both the label and the environment. 

Based on this, we consider a vanilla GCN as the predictor model $\hat y_v = \frac{1}{|N_v|}\sum_{u\in N_v} \theta_1 x_u^1 + \theta_2 x_u^2$. Then the ideal solution for the predictor model is $[\theta_1, \theta_2] = [1, 0]$. This indicates that the GCN identifies the invariant feature, i.e., $x_v^1$ insensitive to environment changes. However, here we show a negative result when using standard empirical risk minimization.
\begin{proposition}\label{prop-1}
    Let the risk under environment $e$ be $R(e) = \frac{1}{|V|} \sum_{v\in V} \mathbb E_{\mathbf y| \mathbf G_{\mathbf v} = G_v}[\| \hat y_v - y_v \|_2^2] $. The unique optimal solution for objective $\min_{\theta} \mathbb E_{\mathbf e}[R(e)]$ would be $[\theta_1, \theta_2] = [\frac{1+\sigma_e^2}{2+\sigma_e^2}, \frac{1}{2+\sigma_e^2} ]$ where $\sigma_e>0$ denotes the standard deviation of $\epsilon$ across environments.
\end{proposition}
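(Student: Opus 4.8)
The plan is to treat $\E_{\mathbf e}[R(e)]$ as a convex quadratic in $\theta=(\theta_1,\theta_2)$ and to locate its unique minimizer through the normal equations. First I would fold the neighbor-aggregation into two scalar random variables, writing $\bar x_v^1 := \frac{1}{|N_v|}\sum_{u\in N_v} x_u^1$ and $\bar x_v^2 := \frac{1}{|N_v|}\sum_{u\in N_v} x_u^2$, so that the prediction is $\hat y_v = \theta_1 \bar x_v^1 + \theta_2 \bar x_v^2$ and the label is $y_v = \bar x_v^1 + n_v^1$. Integrating out the label noise via the inner $\E_{\mathbf y|\mathbf G_{\mathbf v}=G_v}$ and then combining the node-average $\frac1{|V|}\sum_v$ with $\E_{\mathbf e}$, the objective becomes the population risk $\E[(\theta_1 \bar x^1 + \theta_2 \bar x^2 - y)^2]$ taken over the joint law of $(\bar x^1,\bar x^2,y,e)$. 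Since $\hat y$ is linear in $\theta$ and the loss is squared error, this is a quadratic form $\theta^\top M \theta - 2 b^\top \theta + \text{const}$ with $M = \E[z z^\top]$, $z=(\bar x^1,\bar x^2)^\top$, and $b = \E[z\,y]$; convexity is then immediate, so any stationary point is a global minimizer.

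Next I would write the stationarity condition $M\theta = b$ explicitly and evaluate the five second-order moments from the generative equations together with the mutual independence of $n^1$, $n^2$, and $\epsilon$ (all zero-mean). The key computations are: (i) $\E[\bar x^1 y]=\E[(\bar x^1)^2]$ and $\E[\bar x^1 \bar x^2]=\E[(\bar x^1)^2]$, because the spurious feature's dependence on $\bar x^1$ flows only through $y$, while its extra terms $n^2$ and $\epsilon$ are independent of $\bar x^1$; (ii) $\E[(\bar x^2)^2]=\E[(\bar x^1)^2]+1+\sigma_e^2$, where the $+1$ comes from $\Var(n^2)$ and the $+\sigma_e^2$ from $\E_{\mathbf e}[\Var(\epsilon)]$, which is the only place the environment enters; and (iii) $\E[\bar x^2 y]=\E[(\bar x^1)^2]+1$. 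Substituting these, the first normal equation collapses to $\theta_1+\theta_2=1$ and the second to $\theta_2(2+\sigma_e^2)=1$, which solve to $\theta_2=\frac{1}{2+\sigma_e^2}$ and $\theta_1=\frac{1+\sigma_e^2}{2+\sigma_e^2}$ — notably independent of the invariant feature's variance, matching the claim.

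For the uniqueness part of the statement I would argue that $M=\E[zz^\top]\succ 0$: because $\bar x^2$ carries the independent noise $n^2$ (and $\epsilon$) that is absent from $\bar x^1$, the two aggregated features are not almost-surely collinear, so the Gram matrix is strictly positive definite, the quadratic is strictly convex, and the minimizer is unique.

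The step I expect to be the main obstacle is the moment bookkeeping under graph aggregation. The predictor averages $x^2$ over the neighbors of $v$, while $x_v^2$ is itself built from an average of labels, so $\bar x^2$ really involves a two-hop (double) averaging of the invariant signal; a naive expansion would introduce operator terms like $S^2\bar x^1$ that need not preserve the clean covariance ratios in (i)--(iii). I would therefore need to verify that this extra smoothing only rescales the variance of the invariant component and leaves those cross-moment identities intact — for instance by specializing to a regular/symmetric aggregation operator $S$ with $S\mathbf{1}=\mathbf{1}$, or by checking the covariance identities directly at the level of $S$ — so that the $2\times2$ system above is genuinely the correct one and the independence from the invariant feature's variance is real rather than an artifact of collapsing the second aggregation.
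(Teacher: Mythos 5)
Your overall route --- recognize $\mathbb E_{\mathbf e}[R(e)]$ as a convex quadratic in $(\theta_1,\theta_2)$ and solve the $2\times 2$ normal equations --- is the same as the paper's, which sets $\partial L_1/\partial\theta_1=\partial L_1/\partial\theta_2=0$ and solves the resulting linear system; your explicit positive-definiteness argument for uniqueness is a welcome addition that the paper leaves implicit. However, your moment bookkeeping is internally inconsistent as written. Since $\bar x^2$ inherits the label noise $n^1$ through $y$ --- which is exactly what produces the $+1$ in your item (iii), $\mathbb E[\bar x^2 y]=\mathbb E[(\bar x^1)^2]+1$ --- its second moment must also pick up $\mathrm{Var}(n^1)=1$, so item (ii) should read $\mathbb E[(\bar x^2)^2]=\mathbb E[(\bar x^1)^2]+2+\sigma_e^2$. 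With (ii) as you state it, the second normal equation collapses to $\theta_2(1+\sigma_e^2)=1$, not $\theta_2(2+\sigma_e^2)=1$; the claimed answer requires the corrected moment. Fix that one line and the computation goes through.

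On the double-averaging worry you raise at the end: you are right that a literal expansion of $\bar x_v^2=\frac{1}{|N_v|}\sum_{u\in N_v}x_u^2$ involves a two-hop average of labels plus averaged (hence variance-reduced) noises, so the clean identities (i)--(iii) are not automatic. The paper does not verify this either; its proof simply identifies the aggregated spurious feature with $y_v+n_v^2+\epsilon=\bar x_v^1+n_v^1+n_v^2+\epsilon$, writes the residual as $(\theta_1+\theta_2-1)\bar x_v^1+\theta_2(n_v^1+n_v^2+\epsilon)-n_v^1$, and treats all noises as unit-variance. So the identification you flag is taken as part of the toy model's setup rather than something to be proved, and under that reading your argument (with the corrected moment) reproduces the paper's.
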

This indicates that directly minimizing the expectation of risks across environments would inevitably lead the model to rely on spurious correlation ($x_v^2$ depends on environments). Also, such a reliance would be strengthened with smaller $\sigma_e$, i.e., when there is less uncertainty for the effect from environments. To mitigate the issue, fortunately, we can prove another result that implies a new objective as a sufficient condition for the ideal solution.
\begin{proposition}\label{prop-2}
    The objective $\min_{\theta} \mathbb V_e[R(e)]$ reaches the optimum if and only if $[\theta_1, \theta_2]=[1, 0]$.
\end{proposition}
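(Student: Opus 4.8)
The plan is to reduce everything to a closed form for $R(e)$ and then isolate how the environment enters. Writing the aggregated features $\bar x_v^1 = \frac{1}{|N_v|}\sum_{u\in N_v} x_u^1$ and $\bar x_v^2 = \frac{1}{|N_v|}\sum_{u\in N_v} x_u^2$, the residual of the linear GCN predictor is $\hat y_v - y_v = (\theta_1-1)\bar x_v^1 + \theta_2\bar x_v^2 - n_v^1$. The key observation is that $y_v$ is generated purely from $\{x_u^1\}$ and $n_v^1$, so it carries no dependence on $\epsilon$; expanding $\bar x_v^2$ through the generative equations then shows that the only environment-sensitive part of the residual is the $\epsilon$-component of $\theta_2\bar x_v^2$. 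Taking $\mathbb E_{\mathbf y\mid\mathbf G_{\mathbf v}=G_v}$ and averaging over $V$, the cross terms with the zero-mean $\epsilon$ drop out, leaving
\begin{equation}
R(e) = C(\theta_1,\theta_2) + \theta_2^2\,\sigma_e^2, \qquad \sigma_e^2 := \Var(\epsilon\mid e),
\end{equation}
where $C$ gathers all environment-independent contributions. As a sanity check, $\mathbb E_e[R(e)] = C + \theta_2^2\,\mathbb E_e[\sigma_e^2]$ recovers the minimizer of Proposition~\ref{prop-1}, confirming the bookkeeping.

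Second, I would read the variance directly off this decomposition. Since $C$ is constant in $e$,
\begin{equation}
\mathbb V_e[R(e)] = \theta_2^4\,\mathbb V_e[\sigma_e^2].
\end{equation}
Because $\epsilon$ has environment-dependent, nondegenerate variance we have $\mathbb V_e[\sigma_e^2]>0$, so $\mathbb V_e[R(e)]\ge 0$ with global minimum $0$ attained exactly when $\theta_2=0$. The forward direction is then immediate: at $[\theta_1,\theta_2]=[1,0]$ the residual equals $-n_v^1$, whose second moment is independent of $e$, so the variance vanishes and $[1,0]$ is a minimizer.

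The delicate part, which I expect to be the main obstacle, is the converse direction that the minimizer is \emph{exactly} $[1,0]$. The variance term alone only forces $\theta_2=0$ and is completely insensitive to $\theta_1$, so the coordinate $\theta_1=1$ cannot be extracted from $\mathbb V_e[R(e)]$ in isolation. To pin it down I would use the structure already available: once $\theta_2=0$ the predictor reduces to $\hat y_v=\theta_1\bar x_v^1$, and the (now environment-invariant) term $C(\theta_1,0)$ contains $(\theta_1-1)^2\,\mathbb E[(\bar x_v^1)^2]$, which is uniquely minimized at $\theta_1=1$; equivalently, $\theta_1=1$ is the coefficient that exactly recovers the sufficiency relation $\mathbf y = c^*(\mathbf r)+\mathbf n$ of Assumption~\ref{assump-1}. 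Thus the cleanest rigorous route is to read ``reaches the optimum'' for the combined mean-plus-variance objective: the variance term eliminates the spurious coefficient $\theta_2$, and the residual mean risk then fixes $\theta_1=1$, singling out $[1,0]$. The two calculations to get right are the closed form $R(e)=C+\theta_2^2\sigma_e^2$ (routine algebra, shared with Proposition~\ref{prop-1}) and the strict positivity $\mathbb V_e[\sigma_e^2]>0$ (which makes $\theta_2=0$ necessary rather than merely sufficient, and follows from the nondegeneracy of $\epsilon$ across environments).
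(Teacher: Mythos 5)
Your calculations are correct, and your route is genuinely different from the paper's. The paper works with $L_2=\mathbb E_{\mathbf e}[R^2(e)]-\mathbb E_{\mathbf e}^2[R(e)]$ directly, differentiates, claims $\partial L_2/\partial\theta_1=0$ forces $\theta_1+\theta_2=1$, and then argues that the remaining stationarity condition in $\theta_2$ can hold for arbitrary environment distributions only at $[1,0]$. You instead derive the closed form $R(e)=C(\theta_1,\theta_2)+\theta_2^2\sigma_e^2$ and hence $\mathbb V_e[R(e)]=\theta_2^4\,\mathbb V_e[\sigma_e^2]$, which buys two things: the ``if'' direction and the necessity of $\theta_2=0$ become one-line consequences (given the non-degeneracy $\mathbb V_e[\sigma_e^2]>0$, which you rightly flag as an assumption that must be extracted from ``variance dependent on $e$''), and, more importantly, it makes visible that $\mathbb V_e[R(e)]$ does not depend on $\theta_1$ at all. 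That observation is not a defect of your argument but a genuine diagnosis: since $R(e)$ depends on $e$ only through the additive term $\theta_2^2\sigma_e^2$, the map $\theta_1\mapsto \mathbb V_e[R(e)]$ is constant, so $\partial L_2/\partial\theta_1$ vanishes identically and cannot yield $\theta_1+\theta_2=1$ as the paper asserts (the paper's expression $\mathbb E[4l^3(e)a_v^1]$ arises from pushing the square of an expectation inside the expectation; the correct derivative $2R(e)\,\partial_{\theta_1}R(e)$ cancels against $2\mathbb E_e[R]\,\mathbb E_e[\partial_{\theta_1}R]$). Consequently the literal ``only if'' of the proposition fails for the variance term in isolation: every $[\theta_1,0]$ attains the minimum value $0$. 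Your repair---reading ``the optimum'' against the combined mean-plus-variance objective of Eq.~\ref{eqn-obj-ori-general}, where $C(\theta_1,0)$ contains $(\theta_1-1)^2\,\mathbb E[(\bar x_v^1)^2]$ and pins $\theta_1=1$---is the right way to make the claim true and is consistent with how the proposition is actually used to motivate Eq.~\ref{eqn-obj-ori-general}. The only caveat is that you are then proving a slightly amended statement rather than the one printed; you should say so explicitly, but the amendment is forced, not optional.
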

The new objective tackles the variance across environments and guarantees the desirable solution. The enlightenment is that if the model yields equal performance on different $e$'s, it would manage to leverage the invariant features, which motivates us to devise a new objective for solving Eq.~\ref{eqn-ood}.

\subsection{Stable Learning with Explore-to-Extrapolate Risk Minimization}

We now return to the general case where we have $\{(G_v, y_v)\}$ for training and leverage a GNN model as the predictor: $\hat y_v = f_\theta(G_v)$. 
The intuition in Section~\ref{sec-method-1} implies a new learning objective:
\begin{equation}\label{eqn-obj-ori-general}
    \min_\theta \mathbb V_{\mathbf e}[L(G^e, Y^e; \theta)] + \beta \mathbb E_{\mathbf e} [L(G^e, Y^e; \theta)],
\end{equation}
where $L(G^e, Y^e; \theta) = \frac{1}{|V_e|} \sum_{v\in V_e}l(f_\theta(G_v^e), y_v^e)$ and $\beta$ is a trading hyper-parameter. If we have training graphs from a sufficient number of environments $\mathcal E_{tr} = \{e\}$ and the correspondence of each graph to a specific $e$, i.e., $\{G^e, Y^e\}_{e\in \mathcal E_{tr}}$ which induces $\{\{G^e_v, y^e_v\}_{v\in V_e}: e\in \mathcal E_{tr}\}$, we can use the empirical estimation with risks from different environments to handle Eq.~\ref{eqn-obj-ori-general} in practice, as is done by the Risk Extrapolation (\textsc{Rex}) approach \citep{invariant-rex}. Unfortunately, as mentioned before, for node-level tasks on graphs, the training data is often a single graph (without any correspondence of nodes to environments), and hence, one only has training data from a single environment. Exceptions are some multi-graph scenarios where one can assume each graph is from an environment, but there are still a very limited number of training graphs (e.g., less than five). The objective Eq.~\ref{eqn-obj-ori-general} would require data from diverse environments to enable the model for desirable extrapolation. To detour such a dilemma, we introduce $K$ auxiliary context generators $g_{w_k}(G)$ ($k=1,\cdots, K$) that aim to generate $K$-fold graph data $\{G^k\}_{k=1}^K$ (which induces $\{\{G^k_v\}_{v\in V}: 1\leq k \leq K\}$) based on the input one $G$ and mimics training data from different environments. The generators are trained to maximize the variance loss so as to explore the environments and facilitate stable learning of the GNN:
\begin{equation}\label{eqn-obj-bilevel}
\begin{split}
    &\min_\theta \mbox{Var}(\{L(g_{w_k^*}(G), Y; \theta): 1\leq k \leq K\}) + \frac{\beta}{K} \sum_{k=1}^K L(g_{w_k^*}(G), Y; \theta),\\
    & \mbox{s. t.}\; [w_1^*, \cdots, w_K^*] = \arg\max_{w_1, \cdots, w_K} \mbox{Var}(\{L(g_{w_k}(G), Y; \theta): 1\leq k \leq K\}),
\end{split}
\end{equation}
where $L(g_{w_k}(G), Y; \theta) = L(G^k, Y; \theta) = \frac{1}{|V|} \sum_{v\in V}l(f_\theta(G_v^k), y_v)$.

One remaining problem is how to specify $g_{w_k}(G)$. Following recent advances in adversarial robustness on graphs \citep{GraphAttack,GSL}, we consider editing graph structures by adding/deleting edges. Assume a Boolean matrix $B^k = \{0, 1\}^{N\times N}$ ($k=1,\cdots, K$) and denote the supplement graph of $A$ as $\overline{A} = \mathbf 1\mathbf 1^\top - I - A$, where $I$ is an identity matrix. Then the modified graph for view $k$ is
$A^k = A + B^k\circ (\overline{A} - A)$ where $\circ$ denotes element-wise product.
The optimization for $B^k$ is difficult due to its non-differentiability and one also needs to constrain the modification within a threshold. To handle this, we use policy gradient method REINFORCE, treating graph generation as a decision process and edge editing as actions (see details in Appendix~\ref{appx-policy}). We call our approach in Eq.~\ref{eqn-obj-bilevel} \emph{Explore-to-Extrapolate Risk Minimization} (\model) and present our training algorithm in Alg.~\ref{alg-training}.

\section{Theoretical Discussions}\label{sec-theory}

We next present theoretical analysis to shed insights on the objective and its relationship with our formulated OOD problem in Section~\ref{sec-background-1}. To begin with, we introduce some building blocks. The GNN model $f$ can be decomposed into an \emph{encoder} $h$ for representation and a \emph{classifier} $c$ for prediction, i.e., $f=c\circ h$ and we have $z_v = h(G_v)$, $\hat y_v = c(z_v)$.
Besides, we assume $I(\mathbf x; \mathbf y)$ stands for the mutual information between $\mathbf x$ and $\mathbf y$ and $I(\mathbf x; \mathbf y| \mathbf z)$ denotes the conditional mutual information given $\mathbf z$. To keep notations simple, we define $p_e(\cdot) = p(\cdot| \mathbf e = e)$ and $I_e(\cdot) = I(\cdot| \mathbf e = e)$. Another tricky point is that in computation of the KL divergence and mutual information, we require the samples from the joint distribution $p_e(\mathbf G, \mathbf Y)$, which also results in difficulty for handling data generation of interconnected nodes. Therefore, we again adopt our perspective in Section~\ref{sec-background-1} and consider a two-step sampling procedure. Concretely, for any probability function $f_1, f_2$ associated with ego-graphs $\mathbf G_{\mathbf v}$ and node labels $\mathbf y$, we define computation for KL divergence as

\begin{small}
\begin{equation}\label{eqn-kl-new}
    D_{KL}(f_1(\mathbf G_{\mathbf v}, \mathbf y) \| f_2(\mathbf G_{\mathbf v}, \mathbf y)) := \mathbb E_{G\sim p(\mathbf G)} \left [ \frac{1}{|V|} \sum_{v\in V} \mathbb E_{y_v\sim p(\mathbf y|\mathbf G_{\mathbf v} =G_v)} \left [\log \frac{f_1(\mathbf G_{\mathbf v} = G_v, \mathbf y = y_v)}{f_2(\mathbf G_{\mathbf v} = G_v, \mathbf y = y_v)} \right] \right ].
\end{equation}
\vspace{-20pt}
\end{small}

\subsection{Relationship between Invariance Principle and OOD Problem}\label{sec-theory-1}
\vspace{-6pt}
We will show that the objective Eq.~\ref{eqn-obj-ori-general} can guarantee a valid solution for OOD problem Eq.~\ref{eqn-ood}. To this end, we rely on another assumption for data-generating distribution.
\begin{assumption}\label{assump-2}
    (Environment Heterogeneity): For $(\mathbf G_{\mathbf v}, \mathbf r)$ that satisfies Assumption~\ref{assump-1}, there exists a random variable $\overline{\mathbf r}$ such that $\mathbf G_{\mathbf v} = m(\mathbf r, \overline{\mathbf r})$ where $m$ is a functional mapping. We assume that $p(\mathbf y| \overline{\mathbf r}, \mathbf e = e)$ would arbitrarily change across environments $e\in \mathcal E$.
\end{assumption}
Assumptions~\ref{assump-1} and \ref{assump-2} essentially distill two portions of features in input data: one is domain-invariant for prediction and the other contributes to sensitive prediction that depends on environments. The GNN model $f=c\circ h$ induces two model distributions $q(\mathbf z| \mathbf G_{\mathbf v})$ (by the encoder) and $q(\mathbf y| \mathbf z)$ (by the classifier). Based on this, we can dissect the effects of Eq.~\ref{eqn-obj-ori-general} which indeed forces the representation $\mathbf z$ to satisfy the \emph{invariance} and \emph{sufficiency} conditions illustrated in Assumption~\ref{assump-1}.
\begin{theorem}\label{thm-info}
    If $q(\mathbf y| \mathbf z)$ is treated as a variational distribution, then 1) minimizing the expectation term in Eq.~\ref{eqn-obj-ori-general} contributes to $\max_{q(\mathbf z| \mathbf G_{\mathbf v})} I(\mathbf y; \mathbf z)$, i.e., enforcing the sufficiency condition on $\mathbf z$ for prediction, and 2) minimizing the variance term in Eq.~\ref{eqn-obj-ori-general} would play a role for $\min_{q(\mathbf z| \mathbf G_{\mathbf v})} I(\mathbf y; \mathbf e| \mathbf z)$, i.e., enforcing the invariance condition $p(\mathbf y|\mathbf z, \mathbf e) = p(\mathbf y|\mathbf z)$.
\end{theorem}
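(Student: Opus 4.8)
The plan is to read the loss $l$ as a negative log-likelihood (cross-entropy, or squared error under a Gaussian likelihood), so that each per-environment term $L(G^e,Y^e;\theta)$ is, in population form, the risk $R(e)=\mathbb{E}_{p_e(\mathbf{G}_{\mathbf{v}},\mathbf{y})}[-\log q(\mathbf{y}|\mathbf{z})]$, where $\mathbf{z}=h(\mathbf{G}_{\mathbf{v}})$ is produced by the encoder $q(\mathbf{z}|\mathbf{G}_{\mathbf{v}})$ and scored by the variational classifier $q(\mathbf{y}|\mathbf{z})$. With this identification both claims reduce to information-theoretic manipulations of the cross-entropy, and the whole content is in how the two statistics (mean versus variance over $\mathbf{e}$) isolate two different mutual-information quantities.

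For part 1 I would invoke the Barber--Agakov variational lower bound. Writing $I(\mathbf{y};\mathbf{z})=H(\mathbf{y})-H(\mathbf{y}|\mathbf{z})$ with $H(\mathbf{y})$ constant in $\theta$, the identity
\[
-\mathbb{E}_{p(\mathbf{z},\mathbf{y})}[\log q(\mathbf{y}|\mathbf{z})] = H(\mathbf{y}|\mathbf{z}) + \mathbb{E}_{p(\mathbf{z})}\big[D_{KL}(p(\mathbf{y}|\mathbf{z})\,\|\,q(\mathbf{y}|\mathbf{z}))\big]
\]
shows that the environment-averaged loss $\mathbb{E}_{\mathbf{e}}[R(e)]$ is an upper bound on $H(\mathbf{y}|\mathbf{z})$, tight when $q(\mathbf{y}|\mathbf{z})=p(\mathbf{y}|\mathbf{z})$. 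Hence minimizing the expectation term over the encoder maximizes a tight lower bound on $I(\mathbf{y};\mathbf{z})$, which is exactly the sufficiency condition of Assumption~\ref{assump-1}.

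For part 2 I would start from the analogous per-environment decomposition,
\[
R(e)=H_e(\mathbf{y}|\mathbf{z})+\mathbb{E}_{p_e(\mathbf{z})}\big[D_{KL}(p_e(\mathbf{y}|\mathbf{z})\,\|\,q(\mathbf{y}|\mathbf{z}))\big],
\]
and observe that once part 1 has driven the classifier to the environment-marginal posterior $q(\mathbf{y}|\mathbf{z})=p(\mathbf{y}|\mathbf{z})$, the environment-average of the KL terms is precisely $\mathbb{E}_{\mathbf{e}}[\mathbb{E}_{p_e(\mathbf{z})}[D_{KL}(p_e(\mathbf{y}|\mathbf{z})\|p(\mathbf{y}|\mathbf{z}))]]=I(\mathbf{y};\mathbf{e}|\mathbf{z})$. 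Each such KL term is nonnegative and they vanish simultaneously exactly when $p_e(\mathbf{y}|\mathbf{z})=p(\mathbf{y}|\mathbf{z})$ for every $e$, i.e.\ when the invariance condition $p(\mathbf{y}|\mathbf{z},\mathbf{e})=p(\mathbf{y}|\mathbf{z})$ holds; this in turn forces their spread across $\mathbf{e}$ to zero. Driving $\mathbb{V}_{\mathbf{e}}[R(e)]$ to zero equalizes the per-environment risks, and, given the nonnegativity of the KL contributions, is what eliminates the conditional dependence measured by $I(\mathbf{y};\mathbf{e}|\mathbf{z})$.

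The step I expect to be the real obstacle is this last attribution. The risk $R(e)$ also carries the irreducible entropy $H_e(\mathbf{y}|\mathbf{z})$, whose value depends on the environment-specific marginal $p_e(\mathbf{z})$, so $\mathbb{V}_{\mathbf{e}}[R(e)]$ is not literally the variance of the invariance-violating KL terms. To make ``$\mathbb{V}_{\mathbf{e}}[R(e)]\to 0 \Rightarrow I(\mathbf{y};\mathbf{e}|\mathbf{z})\to 0$'' clean I would need either an extra regularity assumption that the irreducible term is (approximately) environment-invariant, or to lean on Assumption~\ref{assump-2}: since $p(\mathbf{y}|\overline{\mathbf{r}},\mathbf{e})$ varies arbitrarily across environments, a representation that yields constant risk regardless can only be one whose predictive content is insensitive to $\mathbf{e}$, pinning the variation onto the spurious part $\overline{\mathbf{r}}$. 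Formalizing this disentangling, rather than the variational bounds themselves, is where the care is required.
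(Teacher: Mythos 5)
Your part 1 is essentially the paper's argument read from the other side: the paper works with $\min_{q(\mathbf z|\mathbf G_{\mathbf v})} I(\mathbf y;\mathbf G_{\mathbf v}\mid\mathbf z)$, which by the Markov structure $\mathbf z \leftarrow \mathbf G_{\mathbf v}\rightarrow\mathbf y$ is equivalent to $\max I(\mathbf y;\mathbf z)$, and then upper-bounds that conditional mutual information by the cross-entropy $D_{KL}(p(\mathbf y\mid\mathbf G_{\mathbf v},\mathbf e)\,\|\,q(\mathbf y\mid\mathbf z))$ by dropping a nonnegative KL and applying Jensen; your Barber--Agakov identity is the same variational bound, so nothing is lost or gained there. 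Part 2 is where you genuinely diverge. The paper bounds $I(\mathbf y;\mathbf e\mid\mathbf z)=D_{KL}(p(\mathbf y\mid\mathbf z,\mathbf e)\,\|\,\mathbb E_{\mathbf e}[p(\mathbf y\mid\mathbf z,\mathbf e)])$ from above by $D_{KL}(q(\mathbf y\mid\mathbf z)\,\|\,\mathbb E_{\mathbf e}[q(\mathbf y\mid\mathbf z)])$ (subtracting two nonnegative KLs) and then invokes Jensen to dominate this by the loss spread $\mathbb E_{\mathbf e}\left[\,\left|L(G^e,Y^e;f)-\mathbb E_{\mathbf e}[L(G^e,Y^e;f)]\right|\,\right]$, which it identifies with the variance term; you instead decompose each $R(e)$ into $H_e(\mathbf y\mid\mathbf z)$ plus a KL whose environment-average is exactly $I(\mathbf y;\mathbf e\mid\mathbf z)$ once $q(\mathbf y\mid\mathbf z)$ equals the marginal posterior. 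The obstacle you flag---that $\mathbb V_{\mathbf e}[R(e)]$ also sees the environment-dependence of the irreducible entropy $H_e(\mathbf y\mid\mathbf z)$, so vanishing risk variance does not by itself force the KL terms to zero---is real, but the paper's route does not escape an analogous looseness: its final step relating $D_{KL}(q\,\|\,\mathbb E_{\mathbf e}[q])$ to the loss spread is equally heuristic, and it silently equates the mean absolute deviation with the variance. Both arguments ultimately support only the soft claim in the theorem statement (``would play a role for''), so your version, which names the extra regularity or the appeal to Assumption~\ref{assump-2} needed to make the attribution clean, is an acceptable and arguably more candid alternative.
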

Based on these results, we can bridge the gap between the invariance principle and OOD problem.
\begin{theorem}\label{thm-ood}
Under Assumption~\ref{assump-1} and \ref{assump-2}, if the GNN encoder $q(\mathbf z|\mathbf G_{\mathbf v})$ satisfies that 1) $I(\mathbf y; \mathbf e| \mathbf z) = 0$ (invariance condition) and 2) $I(\mathbf y; \mathbf z)$ is maximized (sufficiency condition), then the model $f^*$ given by $\mathbb E_{\mathbf y}[\mathbf y| \mathbf z]$ is the solution to OOD problem in Eq.~\ref{eqn-ood}.
\end{theorem}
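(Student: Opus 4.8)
The plan is to show that the two information-theoretic hypotheses pin down a predictor whose worst-case risk over environments coincides with an environment-independent lower bound that \emph{every} predictor must obey, so that $f^*=\mathbb E_{\mathbf y}[\mathbf y|\mathbf z]$ is minimax optimal for Eq.~\ref{eqn-ood}. Throughout I would keep all expectations at the level of the per-node conditional $p(\mathbf y|\mathbf G_{\mathbf v},\mathbf e)$ under the two-step ego-graph sampling of Section~\ref{sec-background-1}, and work with the squared loss used in Proposition~\ref{prop-1} (for which the conditional mean is the Bayes-optimal action).

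First I would translate the two conditions into statements about conditional laws. The invariance condition $I(\mathbf y;\mathbf e|\mathbf z)=0$ is equivalent to $p(\mathbf y|\mathbf z,\mathbf e=e)=p(\mathbf y|\mathbf z)$ for every $e\in\mathcal E$; hence the predictor $f^*=\mathbb E_{\mathbf y}[\mathbf y|\mathbf z]$ is the \emph{same} function across environments and its conditional risk given $\mathbf z$ does not depend on $e$. For the sufficiency condition I would invoke Assumption~\ref{assump-1}: since $\mathbf z=h(\mathbf G_{\mathbf v})$ and the readout $\mathbf r$ is a deterministic function of $\mathbf G_{\mathbf v}$ that is itself sufficient for $\mathbf y$ (because $\mathbf y=c^*(\mathbf r)+\mathbf n$ with $\mathbf n$ independent, so $p(\mathbf y|\mathbf G_{\mathbf v})=p(\mathbf y|\mathbf r)$), the data-processing inequality gives $I(\mathbf y;\mathbf z)\le I(\mathbf y;\mathbf G_{\mathbf v})=I(\mathbf y;\mathbf r)$. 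Maximizing $I(\mathbf y;\mathbf z)$ forces equality, which makes $\mathbf y-\mathbf z-\mathbf G_{\mathbf v}$ a Markov chain, so $\mathbf z$ is a sufficient statistic and $\mathbb E[\mathbf y|\mathbf z]=\mathbb E[\mathbf y|\mathbf r]=c^*(\mathbf r)$.

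Next I would compute the common risk floor and close the two inequalities. Because $\mathbf n$ is zero-mean and independent of $\mathbf r$ and of $\mathbf e$, the conditional-mean predictor leaves only the irreducible residual, so $R_{f^*}(e)=\mathbb E[\|\mathbf n\|_2^2]$ for every $e$ and in particular $\max_{e}R_{f^*}(e)=\mathbb E[\|\mathbf n\|_2^2]$ is environment-independent. Conversely, for any predictor $f$ and any fixed $e$, the Bayes-optimal estimator is $\mathbb E[\mathbf y|\mathbf G_{\mathbf v},\mathbf e=e]=c^*(\mathbf r)$, whence $R_f(e)\ge\mathbb E[\|\mathbf n\|_2^2]$ and therefore $\max_{e}R_f(e)\ge\mathbb E[\|\mathbf n\|_2^2]$. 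Comparing the two bounds shows $f^*$ attains the minimum possible worst-case risk, i.e.\ it solves Eq.~\ref{eqn-ood}.

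The step I expect to be most delicate is the sufficiency argument: carefully justifying that ``$I(\mathbf y;\mathbf z)$ is maximized'' actually yields $\mathbb E[\mathbf y|\mathbf z]=c^*(\mathbf r)$, rather than merely a representation that is informationally as rich as $\mathbf r$. This requires that the attainable maximum of $I(\mathbf y;\mathbf z)$ over admissible encoders equals $I(\mathbf y;\mathbf r)$, and that equality in the data-processing inequality collapses the chain $\mathbf y-\mathbf z-\mathbf G_{\mathbf v}$ so that no predictive content about $\mathbf y$ beyond $\mathbf z$ survives; I would then couple this with the invariance condition so the recovered predictor is simultaneously Bayes-optimal and environment-stable. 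A secondary technical point is that the mutual-information and conditional-mean manipulations must respect the KL/mutual-information conventions of Eq.~\ref{eqn-kl-new}, so I would phrase every data-processing and optimality claim against the per-node conditional underlying Eq.~\ref{eqn-ood} rather than the global graph distribution.
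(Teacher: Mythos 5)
Your reduction of the two information-theoretic conditions to ``$f^*$ equals $c^*(\mathbf r)$ and is the same function in every environment'' matches the role of Lemma~\ref{lemma-eqv} in the paper, but the second half of your argument --- the ``common risk floor'' --- contains a genuine gap, and the tell is that you never use Assumption~\ref{assump-2} even though the theorem explicitly invokes it. You claim that for any fixed $e$ the Bayes-optimal estimator is $\mathbb E[\mathbf y|\mathbf G_{\mathbf v},\mathbf e=e]=c^*(\mathbf r)$, so that every predictor satisfies $R_f(e)\geq \mathbb E[\|\mathbf n\|_2^2]$. This is false in the intended setting: the noise $\mathbf n$ is independent of $\mathbf r$, not of the whole ego-graph $\mathbf G_{\mathbf v}$. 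The spurious portion of $\mathbf G_{\mathbf v}$ (the $\overline{\mathbf r}$ of Assumption~\ref{assump-2}) is typically a descendant of $\mathbf y$ and therefore leaks information about $\mathbf n$ within a fixed environment, so the per-environment Bayes risk is \emph{strictly below} $\mathbb E[\|\mathbf n\|_2^2]$ and a predictor exploiting $\overline{\mathbf r}$ can beat your floor. This is precisely the content of Proposition~\ref{prop-1}: the ERM solution $[\theta_1,\theta_2]=[\tfrac{1+\sigma_e^2}{2+\sigma_e^2},\tfrac{1}{2+\sigma_e^2}]$ achieves lower in-environment risk than the invariant predictor $[1,0]$. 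The same issue infects your sufficiency step, where you assert $p(\mathbf y|\mathbf G_{\mathbf v})=p(\mathbf y|\mathbf r)$ from ``$\mathbf n$ independent''; if that held, the theorem would be trivial and Assumption~\ref{assump-2} vacuous.

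The missing idea is an adversarial-environment construction. The paper uses Assumption~\ref{assump-2} to decompose $\mathbf G_{\mathbf v}=m(\mathbf z,\overline{\mathbf z})$ and, for any competitor $f=c\circ h$ and any environment $e$, builds a new environment $e'$ with $p_{e'}(\mathbf y,\mathbf z,\overline{\mathbf z})=p_e(\mathbf y,\mathbf z)\,p_{e'}(\overline{\mathbf z})$, i.e.\ one in which the spurious component is decoupled from the label. A chain of inequalities (following Theorem 2.1 of the cited work on heterogeneous-risk minimization) then shows that $f$'s risk under $e'$ is at least $f^*$'s risk under $e$, so $\max_{e}R_f(e)\geq \max_e R_{f^*}(e)$ even though $f$ may beat $f^*$ on any single environment. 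Your minimax comparison is structurally right, but the lower bound on $\max_e R_f(e)$ must come from the worst-case environment permitted by Assumption~\ref{assump-2}, not from a per-environment Bayes floor that the spurious features allow competitors to undercut.
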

The above results imply that the objective Eq.~\ref{eqn-obj-ori-general} can guarantee a valid solution for the formulated OOD problem on graph-structured data, which serves as a theoretical justification for our approach.

\subsection{Information-Theoretic Error for OOD Generalization}\label{sec-theory-2}

We proceed to analyze the OOD generalization error given by our learning approach. Recall that we assume training data from $p(\mathbf G, \mathbf Y| \mathbf e = e)$ and testing data from $p(\mathbf G, \mathbf Y| \mathbf e = e')$. Following similar spirits of \cite{invariant-info}, the training error and OOD generalization error can be respectively measured by $D_{KL}(p_e(\mathbf y| \mathbf G_{\mathbf v}) \| q(\mathbf y|\mathbf G_{\mathbf v}))$ and $D_{KL}(p_{e'}(\mathbf y| \mathbf G_{\mathbf v}) \| q(\mathbf y|\mathbf G_{\mathbf v}))$ which can be calculated based on our definition in Eq.~\ref{eqn-kl-new}.
Based on Theorem~\ref{thm-info}, we can arrive at the following theorem which reveals the effect of Eq.~\ref{eqn-obj-ori-general} that contributes to tightening the bound for the OOD error.
\begin{theorem}\label{thm-error}
    Optimizing Eq.~\ref{eqn-obj-ori-general} with training data can minimize the upper bound for $D_{KL}(p_{e'}(\mathbf y| \mathbf G_{\mathbf v}) \| q(\mathbf y|\mathbf G_{\mathbf v})$ on condition that $I_{e'}(\mathbf G_{\mathbf v}; \mathbf y|\mathbf z) = I_{e}(\mathbf G_{\mathbf v}; \mathbf y|\mathbf z)$.
\end{theorem}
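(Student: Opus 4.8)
The plan is to express the OOD error $D_{KL}(p_{e'}(\mathbf y|\mathbf G_{\mathbf v})\,\|\,q(\mathbf y|\mathbf G_{\mathbf v}))$, understood through the two-step-sampling definition of Eq.~\ref{eqn-kl-new}, as a sum of two pieces that route through the learned representation $\mathbf z = h(\mathbf G_{\mathbf v})$, and then to match each piece against the two terms of the training objective Eq.~\ref{eqn-obj-ori-general}. Because the encoder is deterministic we have $q(\mathbf y|\mathbf G_{\mathbf v}) = q(\mathbf y|\mathbf z)$, so I would insert the factor $p_{e'}(\mathbf y|\mathbf z)$ into the log-ratio and split it as $\log\frac{p_{e'}(\mathbf y|\mathbf G_{\mathbf v})}{p_{e'}(\mathbf y|\mathbf z)} + \log\frac{p_{e'}(\mathbf y|\mathbf z)}{q(\mathbf y|\mathbf z)}$. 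Taking expectations under $p_{e'}$ and using that $\mathbf z$ is a function of $\mathbf G_{\mathbf v}$ (hence $p_{e'}(\mathbf y|\mathbf G_{\mathbf v},\mathbf z)=p_{e'}(\mathbf y|\mathbf G_{\mathbf v})$), the first expectation is exactly the conditional mutual information $I_{e'}(\mathbf G_{\mathbf v};\mathbf y|\mathbf z)$ and the second collapses onto $\mathbf z$, giving the exact identity
\[
D_{KL}(p_{e'}(\mathbf y|\mathbf G_{\mathbf v})\,\|\,q(\mathbf y|\mathbf G_{\mathbf v})) = I_{e'}(\mathbf G_{\mathbf v};\mathbf y|\mathbf z) + \mathbb E_{p_{e'}(\mathbf z)}\!\left[D_{KL}(p_{e'}(\mathbf y|\mathbf z)\,\|\,q(\mathbf y|\mathbf z))\right].
\]
The same computation in the training environment gives the analogous decomposition of the training error with $e'$ replaced by $e$ throughout.

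Next I would invoke the hypothesis $I_{e'}(\mathbf G_{\mathbf v};\mathbf y|\mathbf z) = I_{e}(\mathbf G_{\mathbf v};\mathbf y|\mathbf z)$ to rewrite the first, otherwise test-only, term by its training counterpart. To connect this to the objective I would use the chain-rule identity $I(\mathbf y;\mathbf G_{\mathbf v}|\mathbf z) = I(\mathbf y;\mathbf G_{\mathbf v}) - I(\mathbf y;\mathbf z)$, which holds because $\mathbf z$ is determined by $\mathbf G_{\mathbf v}$ so that $I(\mathbf y;\mathbf G_{\mathbf v},\mathbf z)=I(\mathbf y;\mathbf G_{\mathbf v})$ is fixed by the data. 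Since $I(\mathbf y;\mathbf G_{\mathbf v})$ does not depend on the model, the sufficiency direction of Theorem~\ref{thm-info} (driven by the expectation term of Eq.~\ref{eqn-obj-ori-general}, which maximizes $I(\mathbf y;\mathbf z)$) is precisely what shrinks $I_e(\mathbf G_{\mathbf v};\mathbf y|\mathbf z)$, hence the first summand of the OOD error.

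For the remaining representation-level term I would use the invariance direction of Theorem~\ref{thm-info}: minimizing the variance term enforces $I(\mathbf y;\mathbf e|\mathbf z)=0$, i.e. $p_{e'}(\mathbf y|\mathbf z)=p_{e}(\mathbf y|\mathbf z)=:p(\mathbf y|\mathbf z)$. This turns $\mathbb E_{p_{e'}(\mathbf z)}[D_{KL}(p_{e'}(\mathbf y|\mathbf z)\,\|\,q(\mathbf y|\mathbf z))]$ into the environment-independent integrand $D_{KL}(p(\mathbf y|\mathbf z)\,\|\,q(\mathbf y|\mathbf z))$ averaged over the test marginal of $\mathbf z$, and this residual is again driven down by the expectation term, which aligns the classifier $q(\mathbf y|\mathbf z)$ with the invariant conditional $p(\mathbf y|\mathbf z)$. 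Collecting the pieces shows that, under the stated condition, both summands of the OOD error are controlled by the two terms optimized in Eq.~\ref{eqn-obj-ori-general}, establishing that optimizing the objective minimizes the upper bound.

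I expect the main obstacle to be the mismatch between the test marginal $p_{e'}(\mathbf z)$ and the training marginal $p_{e}(\mathbf z)$ that weights the representation-level KL: the decomposition above is an exact equality, but converting it into a bound expressed through quantities actually minimized on training data requires controlling this covariate shift in $\mathbf z$. I would close this gap either by assuming the test support of $\mathbf z$ is contained in the training support, so that driving $q(\mathbf y|\mathbf z)\to p(\mathbf y|\mathbf z)$ pointwise makes the weighting immaterial, or by a uniform boundedness argument on the integrand; this is exactly where the qualifier that the bound ``can be effectively controlled'' does its work. A secondary technical point, which must be handled carefully for the conditional-MI identification to be valid, is justifying the exchange of expectations implicit in the two-step sampling of Eq.~\ref{eqn-kl-new} together with the deterministic-encoder identity $q(\mathbf y|\mathbf G_{\mathbf v})=q(\mathbf y|\mathbf z)$.
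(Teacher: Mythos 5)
Your proposal is correct and follows essentially the same route as the paper: the exact decomposition you derive is the paper's Lemma~\ref{lemma-2} (stated there as an inequality that becomes tight for a deterministic encoder), the hypothesis $I_{e'}(\mathbf G_{\mathbf v};\mathbf y|\mathbf z)=I_{e}(\mathbf G_{\mathbf v};\mathbf y|\mathbf z)$ is used in the same way to transfer the conditional-MI term to the training environment, and both summands are then controlled via the two parts of Theorem~\ref{thm-info} (the paper's Lemma~\ref{lemma-4} playing the role of your chain-rule argument). The only substantive difference is that the covariate-shift obstacle you flag for the representation-level term is handled in the paper quantitatively by Lemma~\ref{lemma-3}, which bounds the test-environment discrepancy by a combination of $\sqrt{I(\mathbf y;\mathbf e|\mathbf z)}$ and the training-environment KL under the same support condition you propose, rather than by passing to the exact-invariance limit.
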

The condition can be satisfied once $\mathbf z$ is a sufficient representation across environments. Therefore, we have proven that the new objective could help to reduce the generalization error on out-of-distribution data and indeed enhance GNN model's power for in-the-wild extrapolation.

\begin{table}[t]
\centering
\caption{Summary of the experimental datasets that entail diverse distribution shifts ("Artificial Transformation" means that we add synthetic spurious features, "Cross-Domain Transfers" means that each graph in the dataset corresponds to distinct domains, "Temporal Evolution" means that the dataset is a dynamic one with evolving nature), different train/val/test splits ("Domain-Level" means splitting by graphs and "Time-Aware" means splitting by time) and the evaluation metrics. In Appendix~\ref{appx-dataset} we provide more detailed information and discussions on the evaluation protocols. \label{tbl-dataset}}
\footnotesize
	\vspace{-6pt}
    \resizebox{0.95\textwidth}{!}{
\begin{threeparttable}
\begin{tabular}{@{}cccccccc@{}}
\toprule
Dataset &    Distribution Shift & \#Nodes & \#Edges & \#Classes & Train/Val/Test Split & Metric & Adapted From \\
\midrule
\cellcolor{blue!70} Cora    &  \cellcolor{red!75}  & 2,703 & 5,278 & 10 & Domain-Level & Accuracy & \cite{cora-data} \\
\cellcolor{blue!60} Amazon-Photo   &  \cellcolor{red!75} \multirow{-2}{*}{ Artificial Transformation}  & 7,650 & 119,081 & 10 & Domain-Level & Accuracy & \cite{amazon-data} \\
\cellcolor{blue!50} Twitch-explicit    &   \cellcolor{red!50} & 1,912 - 9,498 & 31,299 - 153,138 & 2 & Domain-Level & ROC-AUC & \cite{twitch-data} \\
\cellcolor{blue!40} Facebook-100    &  \cellcolor{red!50}\multirow{-2}{*}{Cross-Domain Transfers}  & 769 - 41,536 & 16,656 - 1,590,655 & 2 & Domain-Level & Accuracy & \cite{facebook100-data} \\
\cellcolor{blue!30} Elliptic   &  \cellcolor{red!25}  & 203,769 & 234,355 & 2 & Time-Aware & F1 Score & \cite{EvolveGCN}\tnote{1} \\
\cellcolor{blue!20} OGB-Arxiv & \cellcolor{red!25} \multirow{-2}{*}{Temporal Evolution} & 169,343 & 1,166,243 & 40 & Time-Aware & Accuracy & \cite{ogb} \\
\bottomrule
\end{tabular}
\begin{tablenotes} 
        \footnotesize  
        \item[1] The original dataset is provided at \url{https://www.kaggle.com/ellipticco/elliptic-data-set}.  
      \end{tablenotes}
      \end{threeparttable}}
\vspace{-6pt}
\end{table}

\section{Experiments}\label{sec-exp}
\vspace{-6pt}
In this section, we aim to verify the effectiveness and robustness of our approach in a wide variety of tasks reflecting real situations, using different GNN backbones. Table~\ref{tbl-dataset} summarizes the information of experimental datasets and evaluation protocols, and we provide more dataset information in Appendix~\ref{appx-dataset}. 
We compare our approach \model with standard empirical risk minimization (\base). Implementation details are presented in Appendix~\ref{appx-implement}. In the following subsections, we will investigate three scenarios that require the model to handle distribution shifts stemming from different causes. 

\subsection{Handling Distribution Shifts with Artificial Transformation}\label{sec-exper-1}

\vspace{-6pt}
\begin{figure}[t!]
\centering
\includegraphics[width=\textwidth]{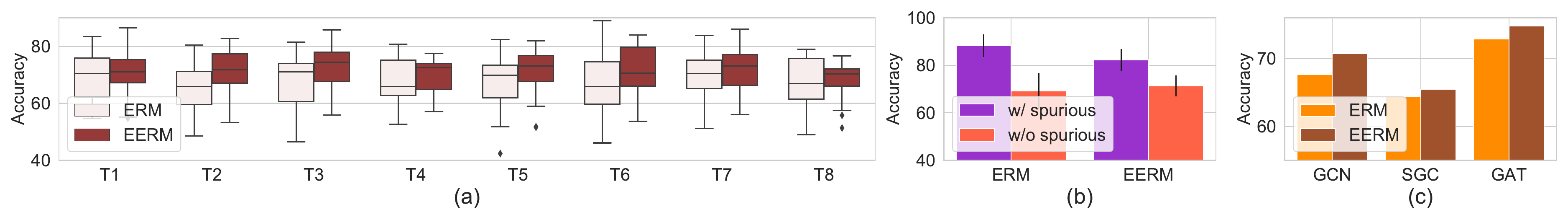}
	\vspace{-20pt}
\caption{Results on \texttt{Cora} with artificial distribution shifts. We run each experiment with 20 trials. (a) The (distribution of) test accuracy of vanilla GCN using our approach for training and using \base. (b) The (averaged) accuracy on the training set (achieved by the epoch where the highest validation accuracy is achieved) when using all the input node features and removing the spurious ones for inference. (c) The (averaged) test accuracy with different GNNs for data generation.}
\label{fig-cora}
	\vspace{-6pt}
\end{figure}
\begin{figure}[t!]
\centering
\includegraphics[width=\textwidth]{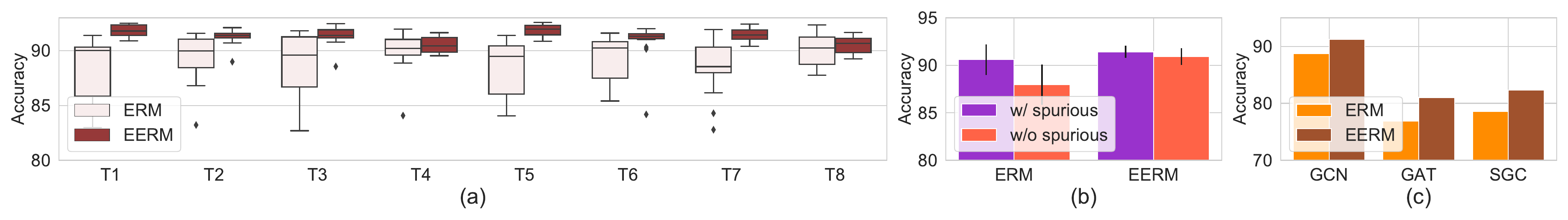}
	\vspace{-20pt}
\caption{Experiment results on \texttt{Amazon-Photo} with artificial distribution shifts.}
\label{fig-photo}
\vspace{-10pt}
\end{figure}

We first consider artificial distribution shifts based on two public node classification benchmarks \texttt{Cora} and \texttt{Amazon-Photo}. For each dataset, we adopt two randomly initialized GNNs to 1) generate node labels based on the original node features and 2) generate spurious features based on the node labels and environment id, respectively (See Appendix~\ref{appx-dataset-1} for details). We generate 10-fold graph data with distinct environment id's and use 1/1/8 of them for training/validation/testing. 

We use a 2-layer vanilla GCN~\citep{GCN-vallina} as the GNN model. We report results on 8 testing graphs (T1$\sim$ T8) of the two datasets in Fig.~\ref{fig-cora}(a) and \ref{fig-photo}(a), respectively, where we also adopt 2-layer GCNs for data generation. The results show that our approach \model consistently outperforms \base on \texttt{Cora} and \texttt{Photo}, 
which suggests the effectiveness of our approach for handling distribution shifts. We also observe that in \texttt{Photo}, the performance variances within one graph and across different test graphs are both much lower compared with those in \texttt{Cora}. We conjecture the reasons are two-fold. First, there is evidence that in \texttt{Cora} the (original) features from adjacent nodes are indeed informative for prediction while in \texttt{Photo} this information contributes to negligible gain over merely using centered node's features. Based on this, once the node features are mixed up with invariant and spurious ones, it would be harder for distinguishing them in the former case that relies more on graph convolution. 

In Fig.~\ref{fig-cora}(b) and Fig.~\ref{fig-photo}(b), we compare the averaged training accuracy (achieved by the epoch with the highest validation accuracy) given by two approaches when using all the input features and removing the spurious ones for inference (we still use all the features for training in the latter case). As we can see, the performance of \base drops much more significantly than \model when we remove the spurious input features, which indicates that the GCN trained with standard \base indeed exploits spurious features to increase training accuracy while our approach can help to alleviate such an issue and guide the model to focus on invariant features. Furthermore, in Fig.~\ref{fig-cora}(c) and Fig.~\ref{fig-photo}(c), we compare the test accuracy averaged on eight graphs when using different GNNs e.g. GCN, SGC~\citep{SGC} and GAT~\citep{GAT}, for data generation (See Appendix~\ref{appx-result} for more results). The results verify that our approach achieves consistently superior performance in different cases.

\subsection{Generalizing to Unseen Domains}\label{sec-exper-2}
\vspace{-6pt}
\begin{figure}[t!]
\centering
\includegraphics[width=0.9\textwidth]{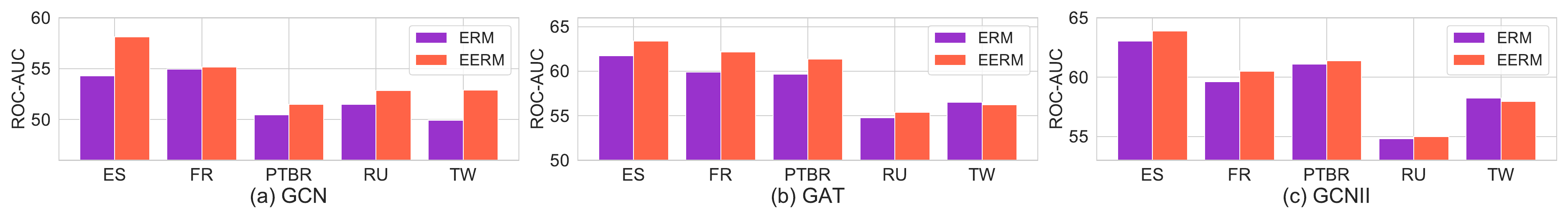}
	\vspace{-12pt}
\caption{Test ROC-AUC on \texttt{Twitch} where we compare different GNN backbones.}
\vspace{-2pt}
\label{fig-twitch}
\end{figure}

\begin{table}[t]
\centering
\caption{Test accuracy on \texttt{FB-100} where we compare different configurations of training graphs. \label{tbl-fb}}
\small
	\vspace{-10pt}
    \resizebox{0.9\textwidth}{!}{
\begin{tabular}{@{}c|cc|cc|cc@{}}
\toprule
\multirow{2}*{Training graph combination} & \multicolumn{2}{c}{Penn} & \multicolumn{2}{c}{Brown} & \multicolumn{2}{c}{Texas} \\ \cmidrule(l){2-7} 
                          &    \base    &   \model   &    \base    &   \model    &   \base   &   \model   \\ \midrule
       John Hopkins + Caltech + Amherst     &     50.48 $\pm$ 1.09  &   \cellcolor{brown!50} 50.64 $\pm$ 0.25    &  54.53 $\pm$ 3.93     &   \cellcolor{brown!50} 56.73 $\pm$ 0.23     &   53.23 $\pm$ 4.49     &  \cellcolor{brown!50} 55.57 $\pm$ 0.75     \\
         Bingham + Duke + Princeton   &    50.17 $\pm$ 0.65   &    \cellcolor{brown!50} 50.67 $\pm$ 0.79   &   50.43 $\pm$ 4.58    &  \cellcolor{brown!50} 52.76 $\pm$ 3.40      &   50.19 $\pm$ 5.81       &    \cellcolor{brown!50} 53.82 $\pm$ 4.88   \\
         WashU + Brandeis+ Carnegie  &  50.83 $\pm$ 0.17    &   \cellcolor{brown!50} 51.52 $\pm$ 0.87    &    54.61 $\pm$ 4.75   &   \cellcolor{brown!50} 55.15 $\pm$ 3.22     &   \cellcolor{brown!50} 56.25 $\pm$ 0.13     &  56.12 $\pm$ 0.42     \\
\bottomrule
\end{tabular}}
\vspace{-5pt}
\end{table}

We proceed to consider another scenario where there are multiple observed graphs in one dataset and a model trained with one graph or a limited number of graphs is expected to generalize to new unseen graphs. The graphs of a dataset share the input feature space and output space and may have different sizes and data distributions since they are collected from different domains. We adopt two public social network datasets \texttt{Twitch-Explicit} and \texttt{Facebook-100} collected by \cite{non-homo-benchmark}. 

\textbf{Training with a Single Graph.} In \texttt{Twitch}, we adopt a single graph DE for training, ENGB for validation and the remaining five networks (ES, FR, PTBR, RU, TW) for testing. We follow \cite{non-homo-benchmark} and use test ROC-AUC for evaluation. We specify the GNN model as GCN, GAT and a recently proposed model GCNII \citep{GCNII} that can address the over-smoothing of GCN and enable stacking of deep layers. The layer numbers are set as 2 for GCN and GAT and 10 for GCNII. Fig.~\ref{fig-twitch} compares the results on five test graphs, where \model significantly outperforms \base in most cases with up to $7.0\%$ improvement on ROC-AUC. The results verify the effectiveness of \model for generalizing to new graphs from unseen domains. 

\textbf{Training with Multiple Graphs.} In \texttt{FB-100}, we adopt three graphs for training, two graphs for validation and the remaining three for testing. We also follow \cite{non-homo-benchmark} and use test accuracy for evaluation. We use GCN as the backbone and compare using different configurations of training graphs, as shown in Table~\ref{tbl-fb}. We can see that \model outperforms \base on average on all the test graphs with up to $7.2\%$ improvement. Furthermore, \model maintains the superiority with different training graphs, which also verifies the robustness of our approach w.r.t. training data.

\subsection{Extrapolating over Dynamic Data}\label{sec-exper-3}

\vspace{-6pt}

\begin{figure}[t!]
    \begin{minipage}{0.5\linewidth}
   \centering
		\includegraphics[width=0.98\textwidth]{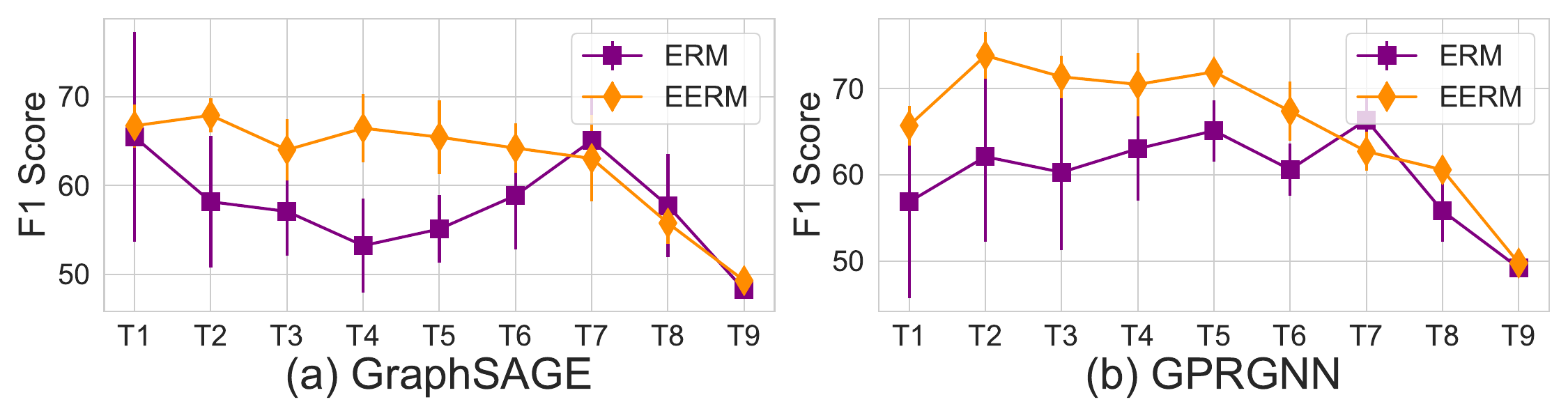}
		\vspace{-6pt}
		\caption{Test F1 score on \texttt{Elliptic} where we group graph snapshots into 9 test sets (T1$\sim$ T9).}
		\label{fig-elliptic}
    \end{minipage}
    \hspace{2pt}
    \begin{minipage}{0.5\linewidth}
    \centering
	\captionof{table}{Test accuracy on \texttt{OGB-Arxiv} with papers in different time intervals for evaluation.}
	\vspace{-6pt}
    \footnotesize
	\label{tbl-ogb}
	\begin{threeparttable}
\setlength{\tabcolsep}{2mm}{ 
\scalebox{0.7}{
        \begin{tabular}{c|ccc}
            \toprule[1pt]
		\specialrule{0em}{1pt}{1pt}
            Method & 2014-2016 & 2016-2018 & 2018-2020  \\
            \specialrule{0em}{1.3pt}{1.3pt} \hline \specialrule{0em}{1.3pt}{1.3pt}
            \base - SAGE & \cellcolor{brown!50} 42.09 $\pm$ 1.39 & 39.92 $\pm$ 2.53 &  36.72 $\pm$ 2.47  \\
            \model - SAGE & 41.55 $\pm$ 0.68 & \cellcolor{brown!50} 40.36 $\pm$ 1.29 & \cellcolor{brown!50} 38.95 $\pm$ 1.57   \\
            \specialrule{0em}{1.5pt}{1.5pt}
            \hline
            \specialrule{0em}{1.5pt}{1.5pt}
            \base - GPR & 47.25 $\pm$ 0.55 & 45.07 $\pm$ 0.57 & 41.53 $\pm$ 0.56   \\
            \model - GPR & \cellcolor{brown!50} 49.88 $\pm$ 0.49 & \cellcolor{brown!50} 48.59 $\pm$ 0.52 & \cellcolor{brown!50} 44.88 $\pm$ 0.62   \\
            \bottomrule[1pt]
		\end{tabular}}
		}
	\end{threeparttable}
    \end{minipage}
    \vspace{-10pt}
\end{figure}

We consider the third scenario where the input data are temporal dynamic graphs and the model is trained with dataset collected at one time and needs to handle newly arrived data in the future. Here are also two sub-cases that correspond to distinct real-world scenarios, as discussed below.

\textbf{Handling Dynamic Graph Snapshots.} We adopt a dynamic financial network dataset \texttt{Elliptic} \citep{EvolveGCN} that contains dozens of graph snapshots where each node is a Bitcoin transaction and the goal is to detect illicit transactions. We use 5/5/33 snapshots for train/valid/test. Following \cite{EvolveGCN} we use F1 score for evaluation. We consider two GNN architectures as the backbone: GraphSAGE \citep{graphsage} and GPRGNN \citep{gprgnn} that can adaptively combine information from node features and graph topology. The results are shown in Fig.~\ref{fig-elliptic} where we group the test graph snapshots into 9 folds in chronological order. Our approach yields much higher F1 scores than \base in most cases with averagely $9.6\%/10.0\%$ improvements using GraphSAGE/GPR-GNN as backbones. Furthermore, there is an interesting phenomenon that both methods suffer a performance drop after T7. The reason is that this is the time when the dark market shutdown occurred \citep{EvolveGCN}. Such an emerging event causes considerable variation to data distributions that leads to performance degrade for both methods, with \base suffering more. In fact, the emerging event acts as an external factor which is unpredictable given the limited training data. The results also suggest that how neural models generalize to OOD data depends on the learning approach but its performance limit is dominated by observed data. Nonetheless, our approach contributes to better F1 scores than \base even in such an extreme case.

\textbf{Handling New Nodes in Temporally Augmented Graph.} Citation networks often go through temporal augmentation with new papers published. We adopt \texttt{OGB-Arxiv} \citep{ogb} for experiments and enlarge the time difference between training and testing data to introduce distribution shifts: we select papers published before 2011 for training, in-between 2011 and 2014 for validation, and within 2014-2016/2016-2018/2018-2020 for testing. Also different from the original (transductive) setting in \cite{ogb}, we use the inductive learning setting, i.e., test nodes are strictly unseen during training, which is more akin to practical situations. Table~\ref{tbl-ogb} presents the test accuracy and shows that \model outperforms \base in five cases out of six. Notably, when using GPRGNN as the backbone, \model manages to achieve up to $8.1\%$ relative improvement, which shows that \model is capable of improving GNN model's learning for extrapolating to future data.

\section{Discussions with Existing Works}
\vspace{-5pt}
We compare with some closely related works and highlight our differences. Due to space limit, we defer more discussions on literature review to Appendix~\ref{appx-related}.

\textbf{Generalization/Extrapolation on Graphs.} Recent endeavors \citep{gnn-gen-4,gnn-gen-1,gnn-gen-2} derive generalization error bounds for GNNs, yet they focus on in-distribution generalization and put little emphasis on distribution shifts, which are the main focus of our work. Furthermore, some up-to-date works explore GNN's extrapolation ability for OOD data, e.g. unseen features/structures \citep{gnn-gen-3} and varying graph sizes \citep{ood-graph-1,ood-graph-2}. However, they mostly concentrate on graph-level tasks rather than node-level ones (see detailed comparison below). Moreover, some recent works probe into extrapolating features' embeddings \citep{FATE-neurips21} or user representations \citep{IDCF-icml21} for open-world learning in tabular data or real systems for recommendation and advertisement. These works consider distribution shifts stemming from augmented input space or unseen entities, and the proposed models leverage GNNs as an explicit model that extrapolates to compute representations for new entities based on existing ones. In contrast, our work studies (implicit) distribution shifts behind observed data, and the proposed approach resorts to an implicit mechanism through the lens of invariance principle.

\textbf{Node-Level v. s. Graph-Level OOD Generalization.} The two classes of graph-related problems are often treated differently in the literature: node-level tasks target prediction on individual nodes that are non-i.i.d. generated due to the interconnection of graph structure; graph-level tasks treat a graph as an instance for prediction and tacitly assume that all the graph instances are sampled in an i.i.d. manner. The distinct nature of them suggests that they need to be tackled in different ways for OOD generalization purpose. 
Graph-level problems have straightforward relationship to the general setting (in Eq.~\ref{eqn-ood-general}) since one can treat input graphs as $x$ and graph labels as $y$. Based on this, the data from one environment becomes a set of i.i.d. generated pairs $(x, y)$ and existing approaches for handling general OOD settings could be naturally generalized. Differently, node-level problems, where nodes inter-connected in one graph (e.g., social network) are instances, cannot be handled in this way due to the non-i.i.d. data generation. Our work introduces a new perspective for formulating OOD problems involving prediction for individual nodes, backed up with a concrete approach for problem solving and theoretical guarantees. While our experiments mainly focus on node-level prediction datasets, the proposed method can be applied to graph-level prediction and also extended beyond graph data (like images/texts) for generalization from limited observed environments.

\section{Conclusion}
\vspace{-5pt}
This work targets out-of-distribution generalization for graph-structured data with the focus on node-level problems where the inter-connection of data points hinders trivial extension from existing formulation and methods. To this end, we take a fresh perspective to formulate the problem in a principled way and further develop a new approach for extrapolation from a single environment, backed up with theoretical guarantees. We also design comprehensive experiments to show the practical power of our approach on various real-world datasets with diverse distribution shifts.

\section{Acknowledgement}

This work was in part supported by National Key Research and Development Program of China (2020AAA0107600), Shanghai Municipal Science and Technology Major Project (2021SHZDZX0102).

\bibliography{iclr2022_conference}
\bibliographystyle{iclr2022_conference}

\clearpage
\appendix
\section{Optimization and Algorithm}\label{appx-policy}

We illustrate the details of using policy gradient for optimizing the graph editers in Eq.~\ref{eqn-obj-bilevel}. Concretely, for view $k$, we consider a parameterized matrix $P_k = \{\pi^k_{nm}\}$. For the $n$-th node, the probability for editing the edge between it and the $m$-th node would be $p(a_{nm}^k) = \frac{\exp(\pi^k_{nm})}{\sum_{m'} \exp(\pi^k_{nm'})}$. We then sample $s$ actions $\{b_{nm_t}^k\}_{t=1}^s$ from a multinomial distribution $\mathcal M(p(a_{n1}^k), \cdots, p(a_{nn}^k))$, which give the non-zero entries in the $n$-th row of $B^k$. The reward function $R(G^k)$ can be defined as the inverse loss. We can use REINFORCE algorithm to optimize the generator with the gradient $\nabla_{w_k} \log p_{w_k}(A^k) R(G^k)$ where $w_k = P_k$ and $p_{w_k}(A^k) = \Pi_n \Pi_{t=1}^s p(b_{nm_t}^k)$. We present the training algorithm in Alg.~\ref{alg-training}.

\begin{algorithm}[h]
	\caption{Stable Learning for OOD Generalization in Node-Level Prediction on Graphs.}
	\label{alg-training}
	\textbf{INPUT:} training graph data $G = (A, X)$ and $Y$, initialized parameters of GNN $\theta$, initialized parameters of graph editers $w = \{w_k\}$, learning rates $\alpha_g$, $\alpha_f$.\\
	\While{not converged or maximum epochs not reached}{
	\For{$t=1,\cdots, T$}{
	    Obtain modified graphs $G^k = (A^k, X)$ from graph editer $g_{w_k}$, $k=1, \cdots, K$\;
	    Compute loss $J_1(w) = \mbox{Var}(\{L(G^k, Y; \theta): 1\leq k \leq K\})$ \;
	    Update $w_k \leftarrow w_k + \alpha_g \nabla_{w_k} \log p_{w_k}(A^k) J_1(w)$, $k=1, \cdots, K$ \;
	    \If{$t == T$}{
	    Compute loss $J_2(\theta) = \mbox{Var}(\{L(G^k, Y; \theta): 1\leq k \leq K\}) + \frac{\beta}{K}\sum_{k=1}^K L(G^k, Y; \theta)$  \;
	    Update $\theta \leftarrow \theta - \alpha_f \nabla_{\theta} J_2(\theta)$ \;    
	    }
	}
			}
	\textbf{OUTPUT:} trained parameters of GNN $\theta^*$.
\end{algorithm}

\section{Further Related Works}\label{appx-related}

\subsection{Out-of-Distribution Generalization and Invariant Models}

Out-of-distribution generalization has drawn extensive attention in the machine learning community. To endow the learning systems with the ability for handling unseen data from new environments, it is natural to learn invariant features under the setting of the causal factorization of physical mechanisms \citep{ood-early-1,ood-early-2}. A recent work \citep{IRM} proposes Invariant Risk Minimization (\textsc{Irm}) as a practical solution for OOD problem via invariance principle. Based on this, follow-up works make solid progress in this direction, e.g., with group distributional robust optimization \citep{ood-extra-dro}, invariant rationalization \citep{invariant-ration}, game theory \citep{invariant-game}, etc. Several works attempt to resolve extended settings. For instance, \cite{ood-extra-group} proposes to match the output distribution spaces from different domains via some divergence, while a recent work \citep{ood-extra-matching} also leverages a matching-based algorithm that resorts to shared representations of cross-domain inputs from the same object. Also, \cite{invariant-envinf} and \cite{invariant-heter} point out that in most real situations, one has no access to the correspondence of each data point in the dataset with a specific environment, based on which they propose to estimate the environments as a latent variable.

\cite{invariant-rex} devises Risk Extrapolation (\textsc{Rex}) which aims at minimizing the weighted combination of the variance and the mean of risks from multiple environments. \cite{ood-extra-variance} contributes to a similar objective from different theoretical perspective. Also, \cite{invariant-imim} extends the spirit of MAML algorithm and arrives at a similar objective form. In our model, we also consider minimization of the combination of variance and mean terms (in Eq.~\ref{eqn-obj-ori-general}) and on top of that we further propose to optimize through a bilevel framework in Eq.~\ref{eqn-obj-bilevel}. Compared to existing works, the differences of \model are two-folds. First, we do not assume input data from multiple environments and the correspondence between each data point and a specific environment. Instead, our formulation enables learning and extrapolation from a single observed environment. Second, on methodology side, we introduce multiple context generators that aim to generate data of virtual environments in an adversarial manner. Besides, our formulation in this paper focus on prediction tasks for nodes on graphs, where the essential difference, as mentioned before, lies in the inter-connection of data points in one graph (that corresponds to an environment), which hinders trivial adaption from existing works in the general setting.

\subsection{Graph Neural Networks and Generalization}

Another line of research related to us attempts to enhance the generalization ability of graph neural networks via modifying the graph structures. One category of recent works is to learn new graph structures based on the input graph and node features. To improve the generalization power, a common practice is to enforce a certain regularization for the learned graph structures. For example, \cite{GSL} proposes to constrain the sparsity and smoothness of graphs via matrix norms and further adopts proximal gradient methods for handling the non-differentiable issue. \cite{graph-extra-structure3,graph-extra-structure2} also aim to regularize the sparsity and smoothness but differently harness energy function to enforce the constraints. From a different perspective, \cite{GraphAttack} attempts to attack the graph topology for improving the model's robustness and proposes to leverage projected gradient descent to make it tractable for the optimization of discrete graph structures. Alternatively, several works focus on pruning the graph networks \citep{graph-extra-drop2} or adaptively sparsifying the structures \citep{graph-extra-drop4,graph-extra-drop1}. In this paper, we focus on out-of-distribution generalization and target handling distribution shifts over graphs, which is more difficult than the setting of previous methods that concentrate on in-distribution generalization. On methodology side, with a different training scheme, our context generators aim to generate data of multiple virtual environments and are learned from maximizing the variance of risks of multiple (virtual) environments. Also, one can specify our context generators with other existing graph generation/editing/attacking frameworks as mentioned above, which we leave as future works.

There are some very recent works that endeavor to study the out-of-distribution generalization ability of GNNs for node classification. \citep{graph-ood-icml} introduces contextual stochastic block model that is a mix-up of standard stochastic block model and a Gaussian mixture model with each node class corresponding to a component, based on which the authors show some cases where linear separability can be achieved. 
\citep{ma2021subgroup} focuses on subgroup generalization and presents a PAC-Bayesian theoretic framework, while \citep{zhu2021shift} targets distribution shifts between selecting training and testing nodes and proposes new GNN model called Shift-Robust GNNs.
By contrast, our work possesses the following distinct technical contributions. First, we formulate OOD problem for node-level tasks in a general manner without any assumption on specific distribution forms or the way for generation of graph structures (our Assumption~\ref{assump-1} and \ref{assump-2} mainly focus on the existence of causal features and spurious features in data generation across different environments). Second, our proposed model and algorithm are rooted on invariant models, providing a new perspective and methodology for OOD learning on graphs. Third, we design and conduct comprehensive experiments on diverse real-world datasets that can reflect in-the-wild nature in real situations (e.g., cross-graph transfers and dynamic evolution) and demonstrate the power of our approach in three scenarios.

\section{Proofs for Section~\ref{sec-method-1}}

\vspace{-6pt}
\subsection{Proof for Proposition~\ref{prop-1}}

\vspace{-6pt}
We define the aggregated node feature $a_v = \frac{1}{|N_v|}\sum_{u\in N_v} x_u$, $a_v^1 = \frac{1}{|N_v|}\sum_{u\in N_v} x_u^1$ and $a_v^2 = \frac{1}{|N_v|}\sum_{u\in N_v} x_u^2$. According to the definition and setup in Section~\ref{sec-method-1}, we have
\begin{equation}
    a_v^2 = \frac{1}{|N_v|}\sum_{u\in N_v}(x_u^1 + n_u^1 +n_u^2 +\epsilon) = a_v^1 + \frac{1}{|N_v|}\sum_{u\in N_v}( n_u^1 +n_u^2 +\epsilon).
\end{equation}
Then we derive the risk under a specific environment $R(e)$:
\begin{equation}
    \begin{split}
        & \frac{1}{|V|} \sum_{v\in V} \mathbb E_{\mathbf y| \mathbf G_{\mathbf v} = G_v}\left [\| \hat y_v - y_v \|_2^2\right ] \\
        = & \frac{1}{|V|} \sum_{v\in V} \mathbb E_{\mathbf n^1, \mathbf n^2} \left [\| \theta_1 a_v^1 + \theta_2 a_v^2 - y_v \|_2^2 \right] \\
        = & \frac{1}{|V|} \sum_{v\in V} \mathbb E_{\mathbf n^1, \mathbf n^2} \left [\| (\theta_1 + \theta_2 -1) a_v^1 + \theta_2 (n_v^1 + n_v^2 + \epsilon) - n_v^1 \|_2^2 \right ].
    \end{split}
\end{equation}
Denote the objective for empirical risk minimization as $L_1 = \mathbb E_{\mathbf e}[R(e)]$ and we have its first-order derivative w.r.t. $\theta_1$ as
\begin{equation}
    \begin{split}
        \frac{\partial L_1}{\partial \theta_1} & = \mathbb E_{\mathbf e} \left [ \frac{1}{|V|} \sum_{v\in V} \mathbb E_{\mathbf n^1, \mathbf n^2} \left [ 2[(\theta_1 + \theta_2 -1) a_v^1 + \theta_2 (n_v^1 + n_v^2 + \epsilon) - n_v^1] \cdot a_v^1 \right ] \right ] \\ 
        & = \mathbb E_{\mathbf e} \left [ \frac{1}{|V|} \sum_{v\in V} \mathbb E_{\mathbf n^1, \mathbf n^2} \left [ 2[(\theta_1 + \theta_2 -1) \cdot a_v^1 \cdot a_v^1 \right ] \right ],
    \end{split}
\end{equation}
where the second step is given by independence among $a_v^1$, $n_v^1$, $n_v^2$ and $\epsilon$. Let $\frac{\partial L_1}{\partial \theta_1} = 0$, and we will obtain $\theta_1+\theta_2=1$.

Also, the first-order derivative w.r.t. $\theta_2$ is
\begin{equation}
    \begin{split}
        \frac{\partial L_1}{\partial \theta_2} & = \mathbb E_{\mathbf e} \left [ \frac{1}{|V|} \sum_{v\in V} \mathbb E_{\mathbf n^1, \mathbf n^2} \left [ 2[(\theta_1 + \theta_2 -1) a_v^1 + \theta_2 (n_v^1 + n_v^2 + \epsilon) - n_v^1] \cdot (a_v^1 + n_v^1 + n_v^2 +\epsilon) \right ] \right ] \\
        & = \mathbb E_{\mathbf e} \left [ \frac{1}{|V|} \sum_{v\in V} \mathbb E_{\mathbf n^1, \mathbf n^2} \left [ 2[(\theta_1 + \theta_2 -1) \cdot a_v^1 \cdot a_v^1 + \theta_2 (n_v^1 \cdot n_v^1 + n_v^2 \cdot n_v^2 + \epsilon \cdot \epsilon) - n_v^1\cdot n_v^1 \right ] \right ] \\
        & = \mathbb E_{\mathbf e} \left [ \frac{1}{|V|} \sum_{v\in V} \mathbb E_{\mathbf n^1, \mathbf n^2} \left [ 2[(\theta_1 + \theta_2 -1) \cdot a_v^1 \cdot a_v^1 + \theta_2 (1+1+\sigma_e^2) - 1 \right ] \right ],
    \end{split}
\end{equation}
where the last step is according to $\mathbb E_{\mathbf x}[x^2] = \mathbb E_{\mathbf x}^2[x] + \mathbb V_{\mathbf x}[x]$.
We further let $\frac{\partial L_1}{\partial \theta_2} = 0$, and will get the unique solution
\begin{equation}
    \theta_1 = \frac{1+\sigma_e^2}{2+\sigma_e^2}, \quad \theta_2 = \frac{1}{2+\sigma_e^2}.
\end{equation}

\subsection{Proof for Proposition~\ref{prop-2}}

\vspace{-6pt}
Let $L_2 = \mathbb V_{\mathbf e} [R(e)] = \mathbb E_{\mathbf e}[ R^2(e)] - \mathbb E_{\mathbf e}^2[R(e)]$ and $l(e) = (\theta_1 + \theta_2 -1) a_v^1 + \theta_2(n_v^1 + n_v^2 + \epsilon) - n_v^1$. We derive the first-order derivation of $L_2$ w.r.t. $\theta_1$ and $\theta_2$. Firstly,
\begin{equation}
    \frac{\partial L_2}{\partial \theta_1} = \mathbb E_{\mathbf e} \left [ \frac{1}{|V|} \sum_{v\in V} \mathbb E_{\mathbf n^1, \mathbf n^2} \left [ 4l^3(e) a_v^1 \right ] \right ] + \mathbb E_{\mathbf e}^2 \left [ \frac{1}{|V|} \sum_{v\in V} \mathbb E_{\mathbf n^1, \mathbf n^2} \left [ 2l(e)a_v^1 \right ] \right ],
\end{equation}
\begin{equation}
\begin{split}
    \frac{\partial L_2}{\partial \theta_2} & = \mathbb E_{\mathbf e} \left [ \frac{1}{|V|} \sum_{v\in V} \mathbb E_{\mathbf n^1, \mathbf n^2} \left [ 4l^3(e)\cdot (a_v^1 + n_v^1 + n_v^2 + \epsilon)\right ] \right ] \\
    &+ \mathbb E_{\mathbf e}^2 \left [ \frac{1}{|V|} \sum_{v\in V} \mathbb E_{\mathbf n^1, \mathbf n^2} \left [ 2l(e) \cdot (a_v^1 + n_v^1 + n_v^2 + \epsilon) \right ] \right ].
\end{split}
\end{equation}
By letting $\frac{\partial L_2}{\partial \theta_1} = 0$, we obtain the equation $\theta_1 +\theta_2 = 1$. Plugging it into $\frac{\partial L_2}{\partial \theta_2}$ we have
\begin{equation}
\begin{split}
    \frac{\partial L_2}{\partial \theta_2} & = \mathbb E_{\mathbf e} \left [ \frac{1}{|V|} \sum_{v\in V} \mathbb E_{\mathbf n^1, \mathbf n^2} \left [ 4(\theta_2(n_v^1 + n_v^2 + \epsilon) - n_v^1)^3\cdot (a_v^1 + n_v^1 + n_v^2 + \epsilon)\right ] \right ] \\
    &+ \mathbb E_{\mathbf e}^2 \left [ \frac{1}{|V|} \sum_{v\in V} \mathbb E_{\mathbf n^1, \mathbf n^2} \left [ 2(\theta_2(n_v^1 + n_v^2 + \epsilon) - n_v^1) \cdot (a_v^1 + n_v^1 + n_v^2 + \epsilon) \right ] \right ],
\end{split}
\end{equation}
which is a function of $e$ unless $[\theta_1, \theta_2]=[1, 0]$ that gives rise to $\frac{\partial L_2}{\partial \theta_2} = 0$ for arbitrary distributions of environments. We thus conclude the proof.

\section{Proofs for Section~\ref{sec-theory}}

\vspace{-6pt}

\subsection{Proof for Theorem~\ref{thm-info}}

We first present a useful lemma that interprets the invariance and sufficiency conditions with the terminology of information theory.
\begin{lemma}\label{lemma-eqv}
    The two conditions in Assumption~\ref{assump-1} can be equivalently expressed as 1) (Invariance): $I(\mathbf y; \mathbf e| \mathbf r) = 0$ and 2) (Sufficiency): $I(\mathbf y; \mathbf r)$ is maximized.
\end{lemma}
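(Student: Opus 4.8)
The plan is to prove the two equivalences separately: the invariance equivalence is an exact information-theoretic identity requiring no appeal to the generative model, whereas the sufficiency equivalence I would argue within the generating process posited by Assumption~\ref{assump-1}.

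For the invariance part, I would start from the standard expression of conditional mutual information as an expected KL divergence,
\[
I(\mathbf y; \mathbf e \mid \mathbf r) = \mathbb E_{\mathbf r}\left[ D_{KL}\big( p(\mathbf y, \mathbf e \mid \mathbf r) \,\|\, p(\mathbf y \mid \mathbf r)\, p(\mathbf e \mid \mathbf r) \big) \right].
\]
Since the KL divergence is non-negative and vanishes if and only if its two arguments agree almost everywhere, $I(\mathbf y; \mathbf e \mid \mathbf r) = 0$ holds exactly when $p(\mathbf y, \mathbf e \mid \mathbf r) = p(\mathbf y \mid \mathbf r)\, p(\mathbf e \mid \mathbf r)$, i.e. $\mathbf y \perp \mathbf e \mid \mathbf r$. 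Rewriting the joint via Bayes' rule, this is equivalent to $p(\mathbf y \mid \mathbf r, \mathbf e) = p(\mathbf y \mid \mathbf r)$, which is precisely the invariance condition of Assumption~\ref{assump-1}. This is a clean ``if and only if'' with no hidden assumptions.

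For the sufficiency part, I would first write $I(\mathbf y; \mathbf r) = H(\mathbf y) - H(\mathbf y \mid \mathbf r)$, so that, with $H(\mathbf y)$ fixed by the data, maximizing $I(\mathbf y; \mathbf r)$ over the choice of readout is the same as minimizing the residual uncertainty $H(\mathbf y \mid \mathbf r)$. Because $\mathbf r$ is a deterministic function of the ego-graph $\mathbf G_{\mathbf v}$, the chain $\mathbf y - \mathbf G_{\mathbf v} - \mathbf r$ is Markov, and the data-processing inequality yields $I(\mathbf y; \mathbf r) \le I(\mathbf y; \mathbf G_{\mathbf v})$, equivalently $H(\mathbf y \mid \mathbf r) \ge H(\mathbf y \mid \mathbf G_{\mathbf v})$; hence the maximal value of $I(\mathbf y; \mathbf r)$ is $I(\mathbf y; \mathbf G_{\mathbf v})$, attained exactly when $\mathbf r$ is a sufficient statistic for $\mathbf y$. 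I would then close the loop with the additive-noise model: assuming $\mathbf y = c^*(\mathbf r) + \mathbf n$ with $\mathbf n$ independent of the input, one computes $H(\mathbf y \mid \mathbf r) = H(\mathbf n) = H(\mathbf y \mid \mathbf G_{\mathbf v})$, so the bound is met and $I(\mathbf y; \mathbf r)$ is maximized. Conversely, when $I(\mathbf y; \mathbf r)$ attains its maximum, sufficiency gives $p(\mathbf y \mid \mathbf r, \mathbf G_{\mathbf v}) = p(\mathbf y \mid \mathbf r)$, and setting $c^*(\cdot) = \mathbb E[\mathbf y \mid \mathbf r = \cdot]$ and $\mathbf n = \mathbf y - c^*(\mathbf r)$ recovers the generating form.

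The main obstacle I anticipate is the reverse direction of the sufficiency equivalence: maximal mutual information delivers sufficiency in the distributional sense, $p(\mathbf y \mid \mathbf r, \mathbf G_{\mathbf v}) = p(\mathbf y \mid \mathbf r)$, but reading this back as a genuinely \emph{additive} model $\mathbf y = c^*(\mathbf r) + \mathbf n$ with $\mathbf n$ truly independent of $\mathbf r$ is not automatic for an arbitrary conditional law---it is granted by the specific generative form of Assumption~\ref{assump-1}, where the residual is additive and noise-independent by construction. I would therefore phrase the sufficiency equivalence as holding within that generative model rather than claiming it for every channel $p(\mathbf y \mid \mathbf r)$, and keep the invariance equivalence as the fully general, assumption-free half of the lemma.
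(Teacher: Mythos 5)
Your proof is correct and, for the invariance half, identical in substance to the paper's: both reduce $I(\mathbf y;\mathbf e\mid\mathbf r)=0$ to the vanishing of a KL divergence and hence to $p(\mathbf y\mid\mathbf r,\mathbf e)=p(\mathbf y\mid\mathbf r)$. For the sufficiency half you take a genuinely different route. You invoke the data-processing inequality on the Markov chain $\mathbf y - \mathbf G_{\mathbf v} - \mathbf r$ to identify the maximal value of $I(\mathbf y;\mathbf r)$ as $I(\mathbf y;\mathbf G_{\mathbf v})$, and then verify attainment by the entropy computation $H(\mathbf y\mid\mathbf r)=H(\mathbf n)=H(\mathbf y\mid\mathbf G_{\mathbf v})$ under the additive-noise model. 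The paper instead argues both directions by contradiction: for the forward direction it writes any competing readout as $\mathbf r'=m(\mathbf r,\overline{\mathbf r})$ and shows $I(\mathbf y;\mathbf r')=I(\mathbf y;\mathbf r,\overline{\mathbf r})=I(\mathbf y;\mathbf r)$ since the noise decouples $\mathbf y$ from $\overline{\mathbf r}$ given $\mathbf r$; for the reverse it presupposes the additive form holds for some $\mathbf r'$ and shows that $\mathbf r'$ must be the maximizer. Your version buys an explicit characterization of what ``maximized'' means (equality in DPI, i.e.\ $I(\mathbf y;\mathbf G_{\mathbf v}\mid\mathbf r)=0$), whereas the paper's version never names the maximum and quietly relies on the representability $\mathbf r'=m(\mathbf r,\overline{\mathbf r})$ and on uniqueness of the argmax. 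You are also more candid about the one real gap, which both proofs share: maximal mutual information yields only distributional sufficiency $p(\mathbf y\mid\mathbf r,\mathbf G_{\mathbf v})=p(\mathbf y\mid\mathbf r)$, and recovering a genuinely additive, independent-noise decomposition requires appealing to the generative form posited in Assumption~\ref{assump-1} --- exactly the restriction the paper builds in implicitly by assuming $\mathbf y=c^*(\mathbf r')+\mathbf n$ for some readout $\mathbf r'$. Your explicit scoping of the sufficiency equivalence to that generative model is the honest way to state the lemma, and nothing in your argument would fail where the paper's succeeds.
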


\begin{proof}
    For the invariance, we can easily arrive at the equivalence given the fact
    \begin{equation}
    I(\mathbf y; \mathbf e| \mathbf r) = D_{KL} ( p(\mathbf y | \mathbf e, \mathbf r)\| p(\mathbf y | \mathbf r)).
    \end{equation}
    
    For the sufficiency, we first prove that for $(\mathbf G_{\mathbf v}, \mathbf r, \mathbf y)$ satisfying that $\mathbf y = c^*(\mathbf r) + \mathbf n$ would also satisfy that $\mathbf r = \arg\max_{\mathbf r} I(\mathbf y; \mathbf r)$. We prove it by contradiction. Suppose that $\mathbf r \neq \arg\max_{\mathbf r} I(\mathbf y; \mathbf r)$ and $\mathbf r' = \arg\max_{\mathbf r} I(\mathbf y; \mathbf r)$ where $r'\neq r$. Then there exists a random variable $\mathbf {\overline r} $ such that $\mathbf r' = m(\mathbf r, \mathbf {\overline r})$ where $m$ is a mapping function. We thus have $I(\mathbf y; \mathbf r') = I(\mathbf y; \mathbf r, \mathbf{\overline r}) = I(c^*(\mathbf r); \mathbf r, \mathbf{\overline r}) = I(c^*(\mathbf r); \mathbf r) = I(\mathbf y; \mathbf r)$ which leads to contradiction.
    
    We next prove that for $(\mathbf G_{\mathbf v}, \mathbf r, \mathbf y)$ satisfying that $\mathbf r = \arg\max_{\mathbf r} I(\mathbf y; \mathbf r)$ would also satisfy that $\mathbf y = c^*(\mathbf r) + \mathbf n$. Suppose that $\mathbf y \neq c^*(\mathbf r) + \mathbf n$ and $\mathbf y = c^*(\mathbf r') + \mathbf n$ where $\mathbf r' \neq \mathbf r$. We then have the relationship $I(c^*(\mathbf r'); \mathbf r) \leq I(c^*(\mathbf r'); \mathbf r')$ which yields that $\mathbf r' = \arg\max_{\mathbf r} I(\mathbf y; \mathbf r)$ and leads to contradiction.
\end{proof}

Given the dependency relationship $ \mathbf z \leftarrow \mathbf G_{\mathbf v} \rightarrow \mathbf y$, we have the fact that $\max_{q(\mathbf z|\mathbf G_{\mathbf v})} I(\mathbf y, \mathbf z)$ is equivalent to $\min_{q(\mathbf z|\mathbf G_{\mathbf v})} I(\mathbf y, \mathbf G_{\mathbf v}|\mathbf z)$. Also, we have (treating $q(\mathbf y |\mathbf z)$ as a variational distribution)
\begin{equation}
\begin{split}
    I(\mathbf y, \mathbf G_{\mathbf v}|\mathbf z) &= D_{KL}(p(\mathbf y| \mathbf G_{\mathbf v}, \mathbf e) \| p(\mathbf y|\mathbf z, \mathbf e))  \\
    & = D_{KL}(p(\mathbf y| \mathbf G_{\mathbf v}, \mathbf e) \| q(\mathbf y |\mathbf z)) - D_{KL}(p(\mathbf y|\mathbf z, \mathbf e) \| q(\mathbf y |\mathbf z))  \\
    &\leq D_{KL}(p(\mathbf y| \mathbf G_{\mathbf v}, \mathbf e) \| q(\mathbf y |\mathbf z)).
\end{split}
\end{equation}
Based on this, we have the inequality
\begin{equation}
    I(\mathbf y, \mathbf G_{\mathbf v}|\mathbf z) \leq \min_{q(\mathbf y |\mathbf z)} D_{KL}(p(\mathbf y| \mathbf G_{\mathbf v}, \mathbf e) \| q(\mathbf y |\mathbf z)).
\end{equation}
Also, we have (according to our definition in Eq.~\ref{eqn-kl-new})
\begin{equation}
\begin{split}
    & D_{KL}(p(\mathbf y| \mathbf G_{\mathbf v}, \mathbf e) \| q(\mathbf y |\mathbf z))  \\
    = & \mathbb E_{\mathbf e} \mathbb E_{G\sim p_e(\mathbf G)} \left [\frac{1}{V} \sum_{v\in V} \mathbb E_{y_v\sim p_e(\mathbf y| \mathbf G_{\mathbf v} = G_v)} \mathbb E_{z_v\sim q(\mathbf z| \mathbf G_{\mathbf v} = G_v)} \left [\log \frac{p_e(\mathbf y = y_v| \mathbf G_{\mathbf v} = G_v)}{q(\mathbf y = y_v |\mathbf z = z_v)} \right ] \right ] \\ 
    \leq & \mathbb E_{\mathbf e} \mathbb E_{G\sim p_e(\mathbf G)} \left [\frac{1}{V} \sum_{v\in V} \mathbb E_{y_v\sim p_e(\mathbf y| \mathbf G_{\mathbf v} = G_v)} \left [\log \frac{p_e(\mathbf y = y_v| \mathbf G_{\mathbf v} = G_v)}{\mathbb E_{z_v\sim q(\mathbf z| \mathbf G_{\mathbf v} = G_v)} [q(\mathbf y = y_v |\mathbf z = z_v)]} \right ] \right ],
\end{split}
\end{equation}
where the second step is according to Jensen Inequality and the equality holds if $q(\mathbf z|\mathbf G_{\mathbf v})$ is a delta distribution (induced by the GNN encoder $h$). Then the problem $\min_{q(\mathbf z|\mathbf G_{\mathbf v}), q(\mathbf y|\mathbf z)} D_{KL}(p(\mathbf y| \mathbf G_{\mathbf v}, \mathbf e) \| q(\mathbf y |\mathbf z))$ can be equivalently converted into 
\begin{equation}
    \min_{f} \mathbb E_{\mathbf e} \left[ \frac{1}{|V_e|} \sum_{v\in V_e} l(f(G_v^e), y_v^e) \right ] = \min_f \mathbb E_{\mathbf e} [L(G^e, Y^e; f)].
\end{equation}
We thus have proven that minimizing the expectation term in Eq.~\ref{eqn-obj-ori-general} is to minimize the upper bound of $I(\mathbf y, \mathbf G_{\mathbf v}|\mathbf z)$ and contributes to $\max_{q(\mathbf z|\mathbf G_{\mathbf v})} I(\mathbf y, \mathbf z)$.

Second, we have
\begin{equation}
\begin{split}
    & I(\mathbf y; \mathbf e| \mathbf z) \\
    = & D_{KL}(p(\mathbf y|\mathbf z, \mathbf e) \| p(\mathbf y|\mathbf z)) \\
    = & D_{KL}(p(\mathbf y|\mathbf z, \mathbf e) \| \mathbb E_{\mathbf e} [p(\mathbf y|\mathbf z, \mathbf e) ] ) \\
    = & D_{KL}(q(\mathbf y|\mathbf z) \| \mathbb E_{\mathbf e} [q(\mathbf y|\mathbf z) ] ) - D_{KL}(q(\mathbf y|\mathbf z) \| p(\mathbf y|\mathbf z, \mathbf e)) - D_{KL}( \mathbb E_{\mathbf e} [p(\mathbf y|\mathbf z, \mathbf e) ] \| \mathbb E_{\mathbf e} [q(\mathbf y|\mathbf z) ] ) \\
    \leq & D_{KL}(q(\mathbf y|\mathbf z) \| \mathbb E_{\mathbf e} [q(\mathbf y|\mathbf z) ] ).
\end{split}
\end{equation}
Besides, we have (according to the definition in Eq.~\ref{eqn-kl-new})
\begin{equation}
\begin{split}
    & D_{KL}(q(\mathbf y|\mathbf z) \| \mathbb E_{\mathbf e} [q(\mathbf y|\mathbf z) ] )  \\
    = & \mathbb E_{\mathbf e} \mathbb E_{G\sim p_e(\mathbf G)} \left [ \frac{1}{|V|} \sum_{v\in V}\mathbb E_{y_v\sim p_e(\mathbf y| \mathbf G_{\mathbf v} = G_v)} \mathbb E_{z_v\sim q(\mathbf z| \mathbf G_{\mathbf v} = G_v)} \left [ \log \frac{q(\mathbf y = y_v|\mathbf z = z_v)}{\mathbb E_{\mathbf e}[q(\mathbf y = y_v|\mathbf z = z_v)]} \right ]  \right ] \\
    \leq & \mathbb E_{\mathbf e} \mathbb E_{G\sim p_e(\mathbf G)} \left [ \left |\frac{1}{|V|} \sum_{v\in V}  \mathbb E_{y_v\sim p_e(\mathbf y| \mathbf G_{\mathbf v} = G_v)} \mathbb E_{z_v\sim q(\mathbf z| \mathbf G_{\mathbf v} = G_v)} \left [ \log \frac{q(\mathbf y = y_v|\mathbf z = z_v)}{\mathbb E_{\mathbf e}[q(\mathbf y = y_v|\mathbf z = z_v)]} \right ] \right | \right ] \\
    = & \mathbb E_{\mathbf e} \mathbb E_{G\sim p_e(\mathbf G)} \left [ \left |\frac{1}{|V|} \sum_{v\in V}  \mathbb E_{y_v\sim p_e(\mathbf y| \mathbf G_{\mathbf v} = G_v)} \mathbb E_{z_v\sim q(\mathbf z| \mathbf G_{\mathbf v} = G_v)}  \log q(\mathbf y = y_v|\mathbf z = z_v) \right. \right.  \\
    & - \left.  \left. \mathbb E_{\mathbf e} \left [ \frac{1}{|V|} \sum_{v\in V} \mathbb E_{y_v\sim p_e(\mathbf y| \mathbf G_{\mathbf v} = G_v)} \mathbb E_{z_v\sim q(\mathbf z| \mathbf G_{\mathbf v} = G_v)}  \log q(\mathbf y = y_v|\mathbf z = z_v) \right ]  \right | \right ] \\
    = & \mathbb E_{\mathbf e} [|L(G^e, Y^e; f) - \mathbb E_{\mathbf e} [L(G^e, Y^e; f)]|],
\end{split}
\end{equation}
where the penultimate step holds due to that $q(\mathbf y|\mathbf z)$ is a delta distribution.
By Cauchy–Schwarz inequality, we will obtain that $ D_{KL}(q(\mathbf y|\mathbf z) \| \mathbb E_{\mathbf e} [q(\mathbf y|\mathbf z) ] )$ is upper bounded by 
\begin{equation}
    \sqrt{\mathbb E_{\mathbf e} [|L(G^e, Y^e; f) - \mathbb E_{\mathbf e} [L(G^e, Y^e; f)]|^2]} = \sqrt{\mathbb V_{\mathbf e} [L(G^e, Y^e; f)]}.
\end{equation}
Hence we have proven that minimizing the variance term in Eq.~\ref{eqn-obj-ori-general} plays a role for solving $\min_{q(\mathbf z|\mathbf G_{\mathbf v})} I(\mathbf y; \mathbf e| \mathbf z)$.

\subsection{Proofs for Theorem~\ref{thm-ood}}

With Lemma~\ref{lemma-eqv}, we know that 1) the representation $\mathbf z$ (given by GNN encoder $\mathbf z = h(\mathbf G_{\mathbf v})$) satisfies the invariant condition, i.e., $p(\mathbf y|\mathbf z) = p(\mathbf y|\mathbf z, \mathbf e)$ if and only if $I(\mathbf y; \mathbf e| \mathbf z) = 0$ and 2) the representation $\mathbf z$ satisfies the sufficiency condition, i.e., $\mathbf y = c^*(\mathbf z)+\mathbf n$ if and only if $\mathbf z = \arg\max_{\mathbf z} I(\mathbf y; \mathbf z)$.

We denote the GNN encoder that satisfies the invariance and sufficiency conditions as $h^*$ and the corresponding predictor model $f^*(\mathbf G_{\mathbf v}) = \mathbb E_{\mathbf y}[\mathbf y|h^*(\mathbf G_{\mathbf v})]$ with $f^* = c^*\circ h^*$. Since we assume the GNN encoder $q(\mathbf z|\mathbf G_{\mathbf v})$ satisfies the conditions in Assumption~\ref{assump-1}, then according to Assumption~\ref{assump-2}, we know that there exists random variable $\overline{\mathbf z}$ such that $\mathbf G_{\mathbf v} = m(\mathbf z, \overline{\mathbf z})$ and $p(\mathbf y|\overline{\mathbf z}, \mathbf e)$ would change arbitrarily across environments. Based on this, for any environment $e$ that gives the distribution $p_{e}(\mathbf y, \mathbf z, \overline{\mathbf z})$, we can construct environment $e'$ with the distribution $p_{e'}(\mathbf y, \mathbf z, \overline{\mathbf z})$ that satisfies
\begin{equation}\label{eqn-proof-thm-2-1}
    p_{e'}(\mathbf y, \mathbf z, \overline{\mathbf z}) = p_e(\mathbf y, \mathbf z) p_{e'}(\overline{\mathbf z}).
\end{equation}

Then we follow the reasoning line of Theorem 2.1 in \cite{invariant-heter} to finish the proof by showing that for arbitrary function $f = c\circ h$ and environment $e$, there exists an environment $e'$ such that
\begin{equation}
\begin{split}
    \mathbb E_{G\sim p_{e'}(\mathbf G)} \left [\frac{1}{|V|} \sum_{v\in V}\mathbb E_{y_v\sim p_{e'}(\mathbf y|\mathbf G_{\mathbf v} = G_v)} [l(f(G_v), y_v)] \right ] \\
    \geq \mathbb E_{G\sim p_{e}(\mathbf G)} \left [\frac{1}{|V|} \sum_{v\in V} \mathbb E_{y\sim p_e(\mathbf y|\mathbf G_{\mathbf v} = G_v)} [l(f^*(G_v), y_v)] \right ].
\end{split}
\end{equation}
Concretely we have
\begin{equation}
    \begin{split}
        & \mathbb E_{G\sim p_{e'}(\mathbf G)} \left [\frac{1}{|V|} \sum_{v\in V} \mathbb E_{(y_v, z_v, \overline{z}_v)\sim p_{e'}(\mathbf y, \mathbf z, \overline{\mathbf z} | \mathbf G_{\mathbf v} = G_v)} [l(c(z_v, \overline z_v), y_v)] \right] \\
        = & \mathbb E_{G'\sim p_{e'}(\mathbf G)} \left [\frac{1}{|V'|} \sum_{v'\in V'} \mathbb E_{\overline z_{v'}\sim p_{e'}(\overline{\mathbf z} | \mathbf G_{\mathbf v} = G_{v'}) }
        \mathbb E_{G\sim p_{e}(\mathbf G)} \left [\frac{1}{|V|} \sum_{v\in V} \mathbb E_{(y_v, z_v)\sim p_{e}(\mathbf y, \mathbf z|\mathbf G_{\mathbf v} = G_v)} [l(c(z_v, \overline z_{v'}), y_v)] \right] \right ]\\
        \geq & \mathbb E_{G'\sim p_{e'}(\mathbf G)} \left [\frac{1}{|V'|} \sum_{v'\in V'} \mathbb E_{\overline z_{v'}\sim p_{e'}(\overline{\mathbf z} | \mathbf G_{\mathbf v} = G_{v'}) }
        \mathbb E_{G\sim p_{e}(\mathbf G)} \left [\frac{1}{|V|} \sum_{v\in V} \mathbb E_{(y_v, z_v)\sim p_{e}(\mathbf y, \mathbf z|\mathbf G_{\mathbf v} = G_v)} [l(c^*(z_v, \overline z_{v'}), y_v)] \right] \right ]\\
        = & \mathbb E_{G'\sim p_{e'}(\mathbf G)} \left [\frac{1}{|V'|} \sum_{v'\in V'} \mathbb E_{\overline z_{v'}\sim p_{e'}(\overline{\mathbf z} | \mathbf G_{\mathbf v} = G_{v'}) }
        \mathbb E_{G\sim p_{e}(\mathbf G)} \left [\frac{1}{|V|} \sum_{v\in V} \mathbb E_{(y_v, z_v)\sim p_{e}(\mathbf y, \mathbf z|\mathbf G_{\mathbf v} = G_v)} [l(c^*(z_v), y_v)] \right] \right ]\\
        = & \mathbb E_{G\sim p_{e}(\mathbf G)} \left [\frac{1}{|V|} \sum_{v\in V} \mathbb E_{(y_v, z_v)\sim p_{e}(\mathbf y, \mathbf z|\mathbf G_{\mathbf v} = G_v)} [l(c^*(z_v), y_v)]\right] \\
        = & \mathbb E_{G\sim p_{e}(\mathbf G)} \left [\frac{1}{|V|} \sum_{v\in V} \mathbb E_{(y_v, z_v, \overline z_v)\sim p_{e}(\mathbf y, \mathbf z, \overline{\mathbf z}|\mathbf G_{\mathbf v} = G_v)} [l(c^*(z_v), y_v)] \right],
    \end{split}
\end{equation}
where the first equality is given by Eq.~\ref{eqn-proof-thm-2-1} and the second/third steps are due to the sufficiency condition of $h^*$.

\subsection{Proof for Theorem~\ref{thm-error}}

Recall that according to our definition in Eq.~\ref{eqn-kl-new}, the KL divergence $D_{KL}(p_e(\mathbf y| \mathbf G_{\mathbf v}) \| q(\mathbf y|\mathbf G_{\mathbf v}))$ would be
\begin{equation}
    D_{KL}(p_e(\mathbf y| \mathbf G_{\mathbf v}) \| q(\mathbf y|\mathbf G_{\mathbf v})) := \mathbb E_{G\sim p_e(\mathbf G)} \left [ \frac{1}{|V|} \sum_{v\in V} \mathbb E_{y_v\sim p_e(\mathbf y|\mathbf G_{\mathbf v} =G_v)} \left [\log \frac{p_e(\mathbf y = y_v|\mathbf G_{\mathbf v} = G_v)}{q(\mathbf y = y_v|\mathbf G_{\mathbf v} = G_v)} \right] \right ].
\end{equation}
Based on this newly defined KL divergence, we can extend the information-theoretic framework \citep{invariant-info} for analysis on graph data. First, we can decompose the training error (resp. OOD error) into a representation error and a latent predictive error.
\begin{lemma}\label{lemma-2}
For any GNN encoder $q(\mathbf z|\mathbf G_{\mathbf v})$ and classifier $q(\mathbf y|\mathbf z)$, we have
\begin{equation}
    D_{KL}(p_e(\mathbf y| \mathbf G_{\mathbf v}) \| q(\mathbf y|\mathbf G_{\mathbf v})) \leq I_e(\mathbf G_{\mathbf v}; \mathbf y|\mathbf z) + D_{KL}(p_e(\mathbf y| \mathbf z) \| q(\mathbf y|\mathbf z)),
\end{equation}
\begin{equation}
    D_{KL}(p_{e'}(\mathbf y| \mathbf G_{\mathbf v}) \| q(\mathbf y|\mathbf G_{\mathbf v})) \leq I_{e'}(\mathbf G_{\mathbf v}; \mathbf y|\mathbf z) + D_{KL}(p_{e'}(\mathbf y| \mathbf z) \| q(\mathbf y|\mathbf z)).
\end{equation}
\end{lemma}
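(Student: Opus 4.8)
The plan is to prove the decomposition by inserting the two intermediate distributions $p_e(\mathbf y|\mathbf z)$ (the true posterior of the label given the representation under environment $e$) and $q(\mathbf y|\mathbf z)$ (the classifier) into the log-ratio that defines the target KL divergence, and then discarding a nonpositive remainder via Jensen's inequality. Since the second inequality is identical to the first with $e$ replaced by $e'$, I would carry out the argument only for $e$ and remark that the same chain of equalities applies verbatim to $e'$.

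First I would expand $D_{KL}(p_e(\mathbf y|\mathbf G_{\mathbf v}) \| q(\mathbf y|\mathbf G_{\mathbf v}))$ through the two-step definition in Eq.~\ref{eqn-kl-new}, so that it is an expectation over $G\sim p_e(\mathbf G)$, then $y_v\sim p_e(\mathbf y|\mathbf G_{\mathbf v}=G_v)$, and (once the representation is introduced) $z_v\sim q(\mathbf z|\mathbf G_{\mathbf v}=G_v)$, of the pointwise log-ratio $\log\frac{p_e(\mathbf y|\mathbf G_{\mathbf v})}{q(\mathbf y|\mathbf G_{\mathbf v})}$. The key algebraic move is the telescoping identity
\begin{equation}
\log \frac{p_e(\mathbf y|\mathbf G_{\mathbf v})}{q(\mathbf y|\mathbf G_{\mathbf v})} = \log \frac{p_e(\mathbf y|\mathbf G_{\mathbf v})}{p_e(\mathbf y|\mathbf z)} + \log \frac{p_e(\mathbf y|\mathbf z)}{q(\mathbf y|\mathbf z)} + \log \frac{q(\mathbf y|\mathbf z)}{q(\mathbf y|\mathbf G_{\mathbf v})},
\end{equation}
which holds for every realization once $\mathbf z$ is drawn from the encoder.

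Next I would identify each of the three resulting expectations. Because $\mathbf z$ is produced from $\mathbf G_{\mathbf v}$ by the encoder, the Markov relation $\mathbf z \leftarrow \mathbf G_{\mathbf v} \rightarrow \mathbf y$ gives $p_e(\mathbf y|\mathbf G_{\mathbf v}, \mathbf z) = p_e(\mathbf y|\mathbf G_{\mathbf v})$, so the expectation of the first term is exactly the conditional mutual information $I_e(\mathbf G_{\mathbf v}; \mathbf y|\mathbf z)$ (again instantiated through the two-step definition of Eq.~\ref{eqn-kl-new}). The expectation of the second term is $D_{KL}(p_e(\mathbf y|\mathbf z) \| q(\mathbf y|\mathbf z))$. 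For the third term I would use that the induced model satisfies $q(\mathbf y|\mathbf G_{\mathbf v}) = \mathbb E_{z\sim q(\mathbf z|\mathbf G_{\mathbf v})}[q(\mathbf y|\mathbf z)]$; by concavity of $\log$ and Jensen's inequality, $\mathbb E_{z}[\log q(\mathbf y|\mathbf z)] \leq \log \mathbb E_{z}[q(\mathbf y|\mathbf z)] = \log q(\mathbf y|\mathbf G_{\mathbf v})$, so the expectation of $\log\frac{q(\mathbf y|\mathbf z)}{q(\mathbf y|\mathbf G_{\mathbf v})}$ is nonpositive (and vanishes when the encoder is a delta distribution, as already observed in the proof of Theorem~\ref{thm-info}). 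Dropping this nonpositive remainder yields the claimed upper bound.

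The bulk of the work is bookkeeping the nested expectations of the two-step sampling so that the first term aligns with the mutual-information definition in Eq.~\ref{eqn-kl-new}; this is routine. The genuine obstacles are two. The first is justifying $p_e(\mathbf y|\mathbf G_{\mathbf v}, \mathbf z) = p_e(\mathbf y|\mathbf G_{\mathbf v})$, i.e. that the encoder injects no information about $\mathbf y$ beyond what $\mathbf G_{\mathbf v}$ already carries. The second, which I expect to be the more delicate point, is getting the sign of the Jensen remainder right: it is precisely this nonpositivity that determines the direction of the bound and turns the exact telescoping identity into an \emph{upper} bound on the training (and, for $e'$, the OOD) error.
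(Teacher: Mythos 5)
Your proposal is correct and is essentially the paper's own argument: the paper first applies Jensen's inequality to replace $q(\mathbf y|\mathbf G_{\mathbf v}) = \mathbb E_{z\sim q(\mathbf z|\mathbf G_{\mathbf v})}[q(\mathbf y|\mathbf z)]$ and obtain $D_{KL}(p_e(\mathbf y|\mathbf G_{\mathbf v})\|q(\mathbf y|\mathbf G_{\mathbf v})) \leq D_{KL}(p_e(\mathbf y|\mathbf G_{\mathbf v})\|q(\mathbf y|\mathbf z))$, and then inserts $p_e(\mathbf y|\mathbf z)$ into the log-ratio to split the right-hand side exactly into $I_e(\mathbf G_{\mathbf v};\mathbf y|\mathbf z) + D_{KL}(p_e(\mathbf y|\mathbf z)\|q(\mathbf y|\mathbf z))$. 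Your single three-term telescoping identity merely merges these two steps, with the same Jensen remainder (vanishing for a deterministic encoder) and the same identification of the conditional mutual information via the Markov relation $\mathbf z \leftarrow \mathbf G_{\mathbf v} \rightarrow \mathbf y$.
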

\begin{proof}
    Firstly, we have
    \begin{equation}
        \begin{split}
            & D_{KL}(p_e(\mathbf y| \mathbf G_{\mathbf v}) \| q(\mathbf y|\mathbf G_{\mathbf v})) \\
            = & \mathbb E_{G\sim p_e(\mathbf G)} \left [ \frac{1}{|V|} \sum_{v\in V} \mathbb E_{y_v\sim p_e(\mathbf y|\mathbf G_{\mathbf v} =G_v)} \left [\log \frac{p_e(\mathbf y = y_v|\mathbf G_{\mathbf v} = G_v)}{q(\mathbf y = y_v|\mathbf G_{\mathbf v} = G_v)} \right] \right ] \\
            = & \mathbb E_{G\sim p_e(\mathbf G)} \left [ \frac{1}{|V|} \sum_{v\in V} \mathbb E_{y_v\sim p_e(\mathbf y|\mathbf G_{\mathbf v} =G_v)} \left [\log \frac{p_e(\mathbf y = y_v|\mathbf G_{\mathbf v} = G_v)}{\mathbb E_{z_v\sim q(\mathbf z|\mathbf G_{\mathbf v}=G_v)} q(\mathbf y = y_v|\mathbf z = z_v)} \right] \right ] \\
            \leq & \mathbb E_{G\sim p_e(\mathbf G)} \left [ \frac{1}{|V|} \sum_{v\in V} \mathbb E_{y_v\sim p_e(\mathbf y|\mathbf G_{\mathbf v} =G_v)} \mathbb E_{z_v\sim q(\mathbf z|\mathbf G_{\mathbf v}=G_v)} \left [\log \frac{p_e(\mathbf y = y_v|\mathbf G_{\mathbf v} = G_v)}{q(\mathbf y = y_v|\mathbf z = z_v)} \right] \right ] \\
            = & D_{KL}(p_e(\mathbf y| \mathbf G_{\mathbf v}) \| q(\mathbf y|\mathbf z)),
        \end{split}
    \end{equation}
    where the third step is again due to Jensen Inequality and the equality holds once $q(\mathbf z|\mathbf G_{\mathbf v})$ is a delta distribution.
    
    Besides, we have
    \begin{equation}
        \begin{split}
            & D_{KL}(p_e(\mathbf y| \mathbf G_{\mathbf v}) \| q(\mathbf y|\mathbf z)) \\
            = & \mathbb E_{G\sim p_e(\mathbf G)} \left [ \frac{1}{|V|} \sum_{v\in V} \mathbb E_{y_v\sim p_e(\mathbf y|\mathbf G_{\mathbf v} =G_v)} \mathbb E_{z_v\sim q(\mathbf z|\mathbf G_{\mathbf v}=G_v)} \left [\log \frac{p_e(\mathbf y = y_v|\mathbf G_{\mathbf v} = G_v)}{q(\mathbf y = y_v|\mathbf z = z_v)} \right] \right ] \\
            = & \mathbb E_{G\sim p_e(\mathbf G)} \left [ \frac{1}{|V|} \sum_{v\in V} \mathbb E_{y_v\sim p_e(\mathbf y|\mathbf G_{\mathbf v} =G_v)} \mathbb E_{z_v\sim q(\mathbf z|\mathbf G_{\mathbf v}=G_v)} \left [\log \frac{p_e(\mathbf y = y_v|\mathbf G_{\mathbf v} = G_v) p_e(\mathbf y = y_v| \mathbf z = z_v)}{p_e(\mathbf y = y_v| \mathbf z = z_v)q(\mathbf y = y_v|\mathbf z = z_v)} \right] \right ] \\
            = & I(\mathbf G_{\mathbf v}; \mathbf y| \mathbf z) + D_{KL}(p_e(\mathbf y| \mathbf z) \| q(\mathbf y|\mathbf z)).
        \end{split}
    \end{equation}
    The result for $D_{KL}(p_{e'}(\mathbf y| \mathbf G_{\mathbf v}) \| q(\mathbf y|\mathbf G_{\mathbf v}))$ can be obtained in a similar way.
\end{proof}

\begin{lemma}\label{lemma-3}
For any $q(\mathbf z|\mathbf G_{\mathbf v})$ and $q(\mathbf y|\mathbf z)$, the following inequality holds for any $z$ satisfying $p(\mathbf z=z|\mathbf e = e) >0$, $\forall e\in \mathcal E$.
\begin{equation}
     D_{JSD}(p_{e'}(\mathbf y| \mathbf z) \| q(\mathbf y|\mathbf z)) \leq \left( \sqrt{\frac{1}{2\alpha} I(\mathbf y; \mathbf e| \mathbf z)} + \sqrt{\frac{1}{2} D_{KL}(p_e(\mathbf y| \mathbf z) \| q(\mathbf y|\mathbf z)}\right)^2.
\end{equation}
\end{lemma}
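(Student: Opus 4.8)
The plan is to read the right-hand side as the square of a sum of two square-root-divergence terms, which is the signature of a triangle inequality for a \emph{metric}. The structural fact I would invoke is that $\sqrt{D_{JSD}(\cdot\|\cdot)}$ is a genuine metric on distributions over $\mathbf y$; since the lemma is stated pointwise at a fixed value $\mathbf z=z$, every divergence and the conditional mutual information is simply a quantity attached to distributions of $\mathbf y$ at that $z$. I would therefore insert the \emph{training}-environment conditional $p_e(\mathbf y|\mathbf z)$ as an intermediate point and apply the triangle inequality
\[
\sqrt{D_{JSD}(p_{e'}(\mathbf y|\mathbf z)\|q(\mathbf y|\mathbf z))} \le \sqrt{D_{JSD}(p_{e'}(\mathbf y|\mathbf z)\|p_e(\mathbf y|\mathbf z))} + \sqrt{D_{JSD}(p_e(\mathbf y|\mathbf z)\|q(\mathbf y|\mathbf z))}.
\]
Squaring this at the end reproduces exactly the cross-term structure $(\sqrt{A}+\sqrt{B})^2$ of the claim, so it remains to identify the first leg with $\tfrac{1}{2\alpha}I(\mathbf y;\mathbf e|\mathbf z)$ and the second with $\tfrac12 D_{KL}(p_e(\mathbf y|\mathbf z)\|q(\mathbf y|\mathbf z))$.

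Both legs I would control through a single variational identity: for weights $w_i$, the map $m\mapsto \sum_i w_i D_{KL}(P_i\|m)$ is minimized at the mixture $m=\sum_i w_i P_i$, with minimum equal to the (generalized) Jensen--Shannon divergence. For the \emph{second} leg this gives $D_{JSD}(p_e\|q)\le \tfrac12 D_{KL}(p_e\|q)$ at once, by evaluating the two-point average KL at the suboptimal choice $m=q$, for which the $D_{KL}(q\|q)$ half vanishes. For the \emph{first} leg I would expand the pointwise conditional mutual information as $I(\mathbf y;\mathbf e|\mathbf z=z)=\sum_{\tilde e} w_{\tilde e}\,D_{KL}(p_{\tilde e}(\mathbf y|z)\|\bar p(\mathbf y|z))$, with $\bar p$ the full environment mixture and $w_{\tilde e}=p(\mathbf e=\tilde e|\mathbf z=z)$; discard every term except $\tilde e\in\{e,e'\}$ by nonnegativity, and lower-bound the two surviving weights by $\alpha$. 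The variational identity then yields $\tfrac12[D_{KL}(p_e\|\bar p)+D_{KL}(p_{e'}\|\bar p)]\ge D_{JSD}(p_e\|p_{e'})$, so that $I(\mathbf y;\mathbf e|\mathbf z)\ge 2\alpha\,D_{JSD}(p_{e'}\|p_e)$. This is precisely where the hypothesis $p(\mathbf z=z|\mathbf e=e)>0$ for all $e$ and the constant $\alpha$ (a lower bound on the per-environment posterior weight at $z$) enter: the former guarantees each $p_{\tilde e}(\mathbf y|z)$ is well defined, the latter that $e$ and $e'$ carry nonnegligible mass.

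I expect the main obstacle to be the first leg, where one must reconcile the \emph{pairwise} mixture $(p_e+p_{e'})/2$ defining $D_{JSD}(p_e\|p_{e'})$ with the \emph{full} environment mixture $\bar p$ appearing in the mutual information. The resolution is exactly the minimality of the mixture in the variational identity: because $D_{JSD}(p_e\|p_{e'})$ is the \emph{smallest} value of $\tfrac12 D_{KL}(p_e\|m)+\tfrac12 D_{KL}(p_{e'}\|m)$ over all $m$, substituting $m=\bar p$ can only increase it, so dropping the non-$\{e,e'\}$ terms of $I$ is safe. Assembling the two legs into the triangle inequality and squaring then delivers the stated bound; the only remaining care is to keep the pointwise-in-$z$ reading consistent throughout, which the condition on $z$ in the statement makes natural.
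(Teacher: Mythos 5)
Your proof is correct and follows the same route as the argument the paper relies on: the paper's own ``proof'' simply defers to Proposition~3 of the cited reference, and that proposition is established exactly as you propose --- via the metric property of $\sqrt{D_{JSD}}$ (Endres--Schindelin), a triangle inequality through the intermediate point $p_e(\mathbf y|\mathbf z)$, the bound $D_{JSD}\le\tfrac12 D_{KL}$ from the variational (mixture-minimizing) characterization of the Jensen--Shannon divergence, and the lower bound $I(\mathbf y;\mathbf e|\mathbf z)\ge 2\alpha\, D_{JSD}(p_{e'}(\mathbf y|\mathbf z)\|p_e(\mathbf y|\mathbf z))$ obtained by discarding all but the two relevant environments and bounding their posterior weights below by $\alpha$. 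Your reconstruction is in fact more self-contained than what appears in the paper's appendix, and it correctly identifies the role of the hypothesis $p(\mathbf z=z|\mathbf e=e)>0$ and of the otherwise-undefined constant $\alpha$.
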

\begin{proof}
    The proof can be adapted by from the Proposition 3 in \cite{invariant-info} by replacing $\mathbf e$ in our case with $\mathbf t$.
        
\end{proof}
The results of Lemma~\ref{lemma-2} and \ref{lemma-3} indicate that if we aim to reduce the OOD error measured by $D_{KL}(p_{e'}(\mathbf y| \mathbf G_{\mathbf v}) \| q(\mathbf y|\mathbf G_{\mathbf v})$, one need to control three terms: 1) $I_e(\mathbf G_{\mathbf v}; \mathbf y|\mathbf z)$, 2) $D_{KL}(p_e(\mathbf y| \mathbf z) \| q(\mathbf y|\mathbf z)$ and 3) $I(\mathbf y; \mathbf e| \mathbf z)$. The next lemma unifies minimization for the first two terms.
\begin{lemma}\label{lemma-4}
For any $q(\mathbf z|\mathbf G_{\mathbf v})$ and $q(\mathbf y|\mathbf z)$, we have
\begin{equation}
    \min_{q(\mathbf z|\mathbf G_{\mathbf v}), q(\mathbf y|\mathbf z)} D_{KL} (p_e(\mathbf y|\mathbf G_{\mathbf v}) \| q(\mathbf y|\mathbf z))   \Leftrightarrow \min_{q(\mathbf z|\mathbf G_{\mathbf v})} I_e(\mathbf G_{\mathbf v}; \mathbf y|\mathbf z) + \min_{q(\mathbf y|\mathbf z)} D_{KL}(p_e(\mathbf y| \mathbf z) \| q(\mathbf y|\mathbf z)).
\end{equation}
\end{lemma}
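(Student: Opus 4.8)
The plan is to reduce the claimed equivalence to the pointwise identity already established \emph{inside} the proof of Lemma~\ref{lemma-2}, namely that for every pair $(q(\mathbf z|\mathbf G_{\mathbf v}), q(\mathbf y|\mathbf z))$ one has
\[
D_{KL}(p_e(\mathbf y| \mathbf G_{\mathbf v}) \| q(\mathbf y|\mathbf z)) = I_e(\mathbf G_{\mathbf v}; \mathbf y| \mathbf z) + D_{KL}(p_e(\mathbf y| \mathbf z) \| q(\mathbf y|\mathbf z)).
\]
Once this additive split is in hand, the entire lemma becomes a statement about the separability of a joint minimization, and no new information-theoretic computation is required beyond what Lemma~\ref{lemma-2} supplies.

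First I would record the crucial dependency structure of the two summands. The term $I_e(\mathbf G_{\mathbf v}; \mathbf y|\mathbf z)$ is a functional of the joint law of $(\mathbf G_{\mathbf v}, \mathbf y, \mathbf z)$, which under the Markov chain $\mathbf z \leftarrow \mathbf G_{\mathbf v} \rightarrow \mathbf y$ is determined entirely by the fixed data distribution $p_e(\mathbf G_{\mathbf v}, \mathbf y)$ together with the encoder $q(\mathbf z|\mathbf G_{\mathbf v})$; in particular it does not involve the classifier $q(\mathbf y|\mathbf z)$ at all. The second term $D_{KL}(p_e(\mathbf y| \mathbf z) \| q(\mathbf y|\mathbf z))$ does depend on both, but the key observation is that for \emph{any} fixed encoder its minimum over the classifier is attained at $q(\mathbf y|\mathbf z) = p_e(\mathbf y|\mathbf z)$ and equals zero, since a KL divergence vanishes exactly when its two arguments coincide and the variational family for $q(\mathbf y|\mathbf z)$ is taken to be unconstrained.

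Combining these two facts yields the decoupling. Minimizing the left-hand side over $(q(\mathbf z|\mathbf G_{\mathbf v}), q(\mathbf y|\mathbf z))$ can be performed by first minimizing over the classifier for each fixed encoder — which zeroes the second term without perturbing the first — and then minimizing the residual $I_e(\mathbf G_{\mathbf v}; \mathbf y|\mathbf z)$ over the encoder. Hence the joint minimum equals $\min_{q(\mathbf z|\mathbf G_{\mathbf v})} I_e(\mathbf G_{\mathbf v}; \mathbf y|\mathbf z)$, and since $\min_{q(\mathbf y|\mathbf z)} D_{KL}(p_e(\mathbf y| \mathbf z) \| q(\mathbf y|\mathbf z)) = 0$ for every encoder, this coincides with the right-hand side. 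For the converse direction one verifies that any pair consisting of an $I_e$-minimizing encoder together with the matched classifier $q(\mathbf y|\mathbf z) = p_e(\mathbf y|\mathbf z)$ attains the joint optimum, so the two problems share the same optimal value and compatible argmins.

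I expect the main obstacle to be conceptual rather than computational: one must argue carefully that optimizing the second term over the classifier does not interfere with the first term, and that the minimizer $q(\mathbf y|\mathbf z) = p_e(\mathbf y|\mathbf z)$ remains achievable at the encoder selected to minimize $I_e$. This is clean precisely because the second term's minimum is identically zero across \emph{all} encoders, so there is no trade-off between the two subproblems. The only points requiring genuine care are (i) interpreting the symbol $\Leftrightarrow$ as an equivalence of optimization problems — equality of optimal values with compatible minimizers — rather than as a pointwise identity, and (ii) the tacit assumption that the variational classifier family is rich enough to realize the true conditional $p_e(\mathbf y|\mathbf z)$.
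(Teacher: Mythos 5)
Your proposal is correct and follows essentially the same route as the paper: both rest on the additive identity $D_{KL}(p_e(\mathbf y|\mathbf G_{\mathbf v})\|q(\mathbf y|\mathbf z)) = I_e(\mathbf G_{\mathbf v};\mathbf y|\mathbf z) + D_{KL}(p_e(\mathbf y|\mathbf z)\|q(\mathbf y|\mathbf z))$ together with the observation that the first term is classifier-independent while the second can be driven to zero by $q(\mathbf y|\mathbf z)=p_e(\mathbf y|\mathbf z)$ for any fixed encoder. You merely make explicit the decoupling argument (and the reading of $\Leftrightarrow$ as equivalence of optimization problems) that the paper leaves implicit after stating the inequality and its equality condition.
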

\begin{proof}
    Recall that $q(\mathbf y|\mathbf z)$ is a variational distribution. We have
    \begin{equation}
        \begin{split}
            I_e(\mathbf G_{\mathbf v}; \mathbf y|\mathbf z) & = D_{KL}(p_e(\mathbf y | \mathbf G_{\mathbf v}) \| p_e(\mathbf y|\mathbf z))) \\
            & = D_{KL}(p_e(\mathbf y | \mathbf G_{\mathbf v}) \| q(\mathbf y|\mathbf z)) - D_{KL}(p_e(\mathbf y | \mathbf z) \| q(\mathbf y|\mathbf z) ) \\
            & \leq D_{KL}(p_e(\mathbf y | \mathbf G_{\mathbf v}) \| q(\mathbf y|\mathbf z)).
        \end{split}
    \end{equation}
    Therefore, we can see that $I_e(\mathbf G_{\mathbf v}; \mathbf y|\mathbf z)$ is upper bounded by $D_{KL}(p_e(\mathbf y | \mathbf G_{\mathbf v} \| q(\mathbf y|\mathbf z))$ and the equality holds if and only if $D_{KL}(p_e(\mathbf y | \mathbf z) \| q(\mathbf y|\mathbf z) ) = 0$. We thus conclude the proof.
\end{proof}

Recall that according to Lemma~\ref{lemma-eqv} we have the fact that our objective in Eq.~\ref{eqn-obj-ori-general} essentially has the similar effect as
\begin{equation}
    \min_{q(\mathbf z|\mathbf G_{\mathbf v}), q(\mathbf y|\mathbf z)} D_{KL} (p_e(\mathbf y|\mathbf G_{\mathbf v}) \| q(\mathbf y|\mathbf z)) + I(\mathbf y; \mathbf e| \mathbf z).
\end{equation}
Based on the Lemma~\ref{lemma-2}, \ref{lemma-3} and \ref{lemma-4}, we know that optimization for the objective Eq.~\ref{eqn-obj-ori-general} can reduce the upper bound of OOD error given by $D_{KL}(p_{e'}(\mathbf y| \mathbf G_{\mathbf v}) \| q(\mathbf y|\mathbf G_{\mathbf v})$ on condition that $I_{e'}(\mathbf G_{\mathbf v}; \mathbf y|\mathbf z) = I_{e}(\mathbf G_{\mathbf v}; \mathbf y|\mathbf z)$. We conclude our proof for Theorem~\ref{thm-error}.

\section{Datasets and Evaluation Protocols}\label{appx-dataset}

In this section, we introduce the detailed information for experimental datasets and also provide the details for our evaluation protocols including data preprocessing, dataset splits and the ways for calculating evaluation metrics. In the following subsections, we present the information for the three scenarios, respectively.

\subsection{Artificial Distribution Shifts on \texttt{Cora} and \texttt{Amazon-Photo}}\label{appx-dataset-1}

\begin{table}[t]
\centering
\small
\caption{Statistic information for \texttt{Twitch-Explicit} datasets.}
\label{tbl-twtich-data}
\begin{tabular}{@{}cccccccc@{}}
\toprule
Dataset & \#Nodes & \#Edges & \#Density & Avg Degree & Max Degree \\ \midrule
DE & 9498 & 153138 & 0.0033 & 16 & 3475 \\
ENGB  & 7126 & 35324 & 0.0013 & 4 & 465 \\
ES  & 4648 & 59382 & 0.0054 & 12 & 809 \\
FR  & 6549 & 112666 & 0.0052 & 17 & 1517 \\ 
PTBR  & 1912 & 31299 & 0.0171 & 16 & 455 \\
RU  & 4385 & 37304 & 0.0038 & 8 & 575 \\
TW  & 2772 & 63462 & 0.0165 & 22 & 1171 \\
\bottomrule
\end{tabular}
\end{table}

\begin{figure}[t!]
\centering
\includegraphics[width=0.9\textwidth]{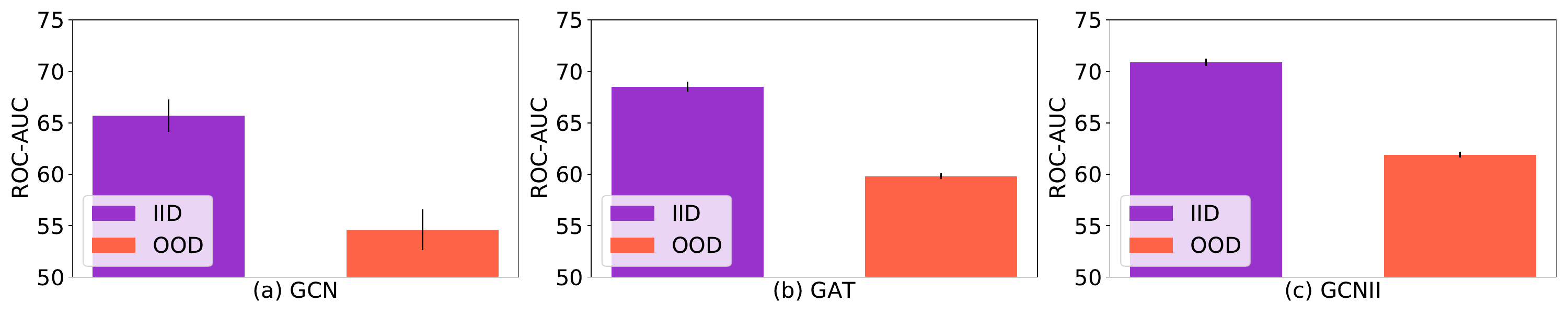}
\caption{Comparison of different leave-out data on \texttt{Twitch-Explicit}.  We consider three GNN backbones trained with \base. The "OOD" means that we train the model on one graph DE and report the metric on another graph ENGB. The "IID" means that we train the model on 90\% nodes of DE and report the metric on the remaining nodes. The results clearly show that the model performance suffers a significantly drop from the case "IID" to the case "OOD". This indicates that the graph-level splitting for training/validation/testing splits used in Section~\ref{sec-exper-2} indeed introduces distribution shifts and would require the model to deal with out-of-distribution data during test.}
\label{fig-twitch-stat}
\end{figure}

\texttt{Cora} and \texttt{Amazon-Photo} are two commonly used node classification benchmarks and widely adopted for evaluating the performance of GNN designs. These datasets are of medium size with thousands of nodes. See Table~\ref{tbl-dataset} for more statistic information. \texttt{Cora} is a citation network where nodes represent papers and edges represent their citation relationship. \texttt{Amazon-Photo} is a co-purchasing network where nodes represent goods and edges indicate that two goods are frequently bought together. In the original dataset, the available node features have strong correlation with node labels. To evaluate model's ability for out-of-distribution generalization, we need to introduce distribution shifts into the training and testing data.

For each dataset, we use the provided node features to construct node labels and spurious environment-sensitive features. Specifically, assume the provided node features as $X_1$. Then we adopt a randomly initialized GNN (with input of $X_1$ and adjacency matrix) to generate node labels $Y$ (via taking an argmax in the output layer to obtain one-hot vectors), and another randomly initialized GNN (with input of the concatenation of $Y$ and an environment id) to generate spurious node features $X_2$. After that, we concatenate two portions of features $X=[X_1, X_2]$ as input node features for training and evaluation. In this way, we construct ten graphs with different environment id's for each dataset. We use one graph for training, one for validation and report the classification accuracy on the remaining graphs. One may realize that this data generation is a generalized version of our motivating example in Section~\ref{sec-method-1} and we replace the linear aggregation as a randomly initialized graph neural network to introduce non-linearity.

In fact, with our data generation, the original node features $X_1$ can be seen as domain-invariant features that are sufficiently predictive for node labels and insensitive to different environments, while the generated features $X_2$ are domain-variant features that are conditioned on environments. Therefore, in principle, the ideal case for the model is to identify and leverage the invariant features for prediction. In practice, there exist multiple factors that may affect model's learning, including the local optimum and noise in data. Therefore, one may not expect the model to exactly achieve the ideal case since there also exists useful predictive information in $X_2$ that may help the model to increase the training accuracy. Yet, through our experiments in Fig.~\ref{fig-cora}(b) and \ref{fig-photo}(b), we show that the reliance of \model on spurious features is much less than \base, which we believe could serve as concrete evidence that our approach is capable for guiding the GNN model to alleviate reliance on domain-variant features. 

\subsection{Cross-Domain Transfers on Multi-Graph Data}\label{appx-dataset-2}

\begin{figure}[t!]
\centering
\includegraphics[width=0.8\textwidth]{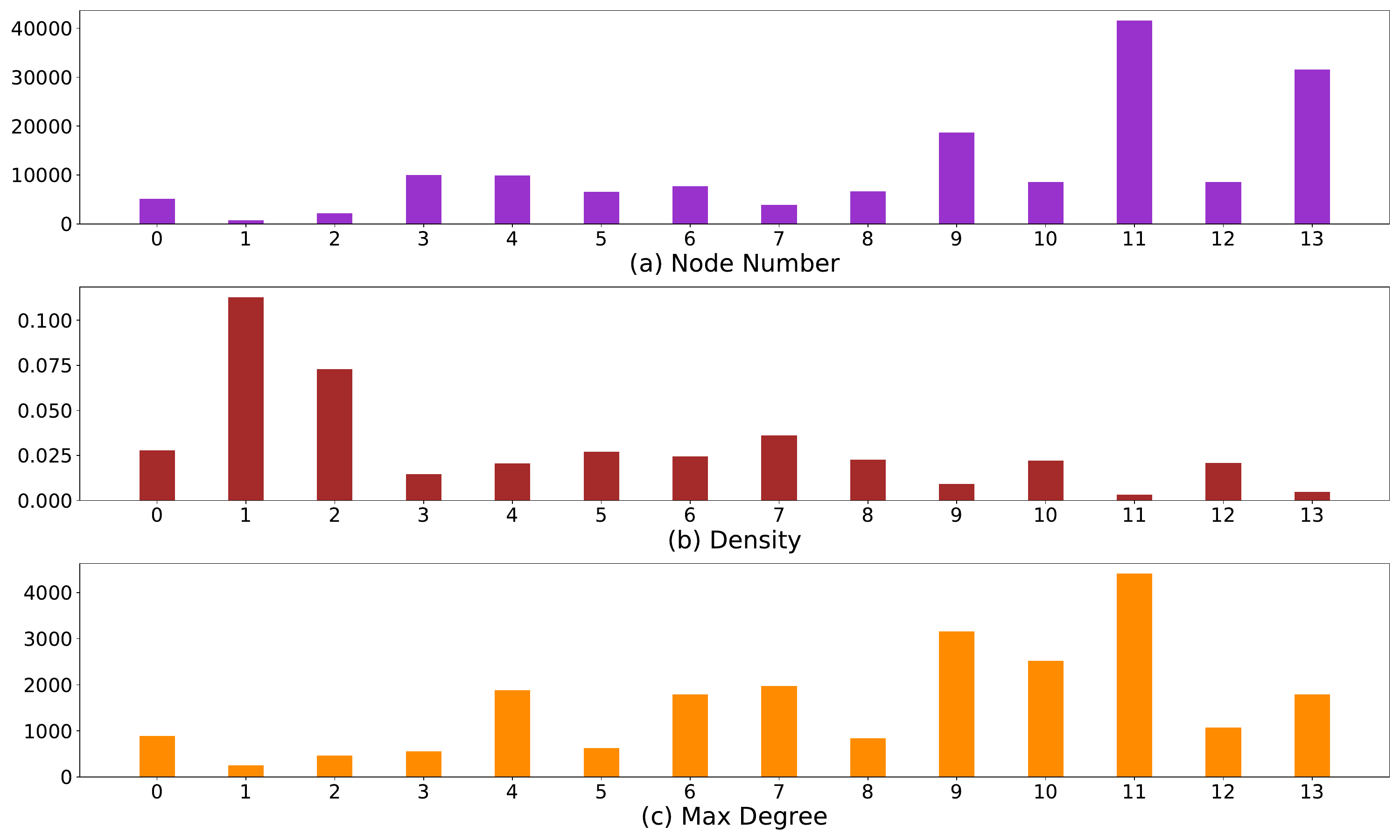}
\caption{Comparison of node numbers, densities and maximum node degrees of fourteen graphs used in our experiments on \texttt{Facebook-100}. The index 0-13 stand for John Hopkins, Caltech, Amherst, Bingham, Duke, Princeton, WashU, Brandeis, Carnegie, Cornell, Yale, Penn, Brown and Texas, respectively. As we can see, these graphs have very distinct statistics, which indicates that there exist distribution shifts w.r.t. graph structures.}
\label{fig-fb100-stat}
\end{figure}

A typical scenario for distribution shifts on graphs is the problem of cross-domain transfers. There are quite a few real-world situations where one has access to multiple observed graphs each of which is from a specific domain. For example, in social networks, the domains can be instantiated as where or when the networks are collected. In protein networks, there may exist observed graph data (protein-protein interactions) from distinct species which can be seen as distinct domains. In short, since most of graph data records the relational structures among a specific group of entities and the interactions/relationships among entities from different groups often have distinct characteristics, the data-generating distributions would vary across groups, which bring up domain shifts. 

Yet, to enable transfer learning across graphs, the graphs in one dataset need to share the same input feature space and output space. We adopt two public datasets \texttt{Twitch-Explicit} and \texttt{Facebook-100} that satisfy this requirement. 

\texttt{Twitch-Explicit} contains seven networks where nodes represent Twitch users and edges represent their mutual friendships. Each network is collected from a particular region, including DE, ENGB, ES, FR, PTBR, RU and TW. These seven networks have similar sizes and different densities and maximum node degrees, as shown in Table~\ref{tbl-twtich-data}. Also, in Fig.~\ref{fig-twitch-stat}, we compare the ROC-AUC results on different leave-out data. We consider GCN, GAT and GCNII as the GNN backbones and train the model with standard empirical risk minimization (\base). We further consider two ways for data splits. In the first case, which we call "OOD", we train the model on the nodes of one graph DE and report the highest ROC-AUC on the nodes of another graph ENGB. In the second case, which we call "IID", we train the model on 90\% nodes of DE and evaluate the performance on the leave-out 10\% data. The results in Fig.~\ref{fig-twitch-stat} show that the model performance exhibits a clear drop from "IID" to "OOD", which indicates that there indeed exist distribution shifts among different input graphs. This also serves as a justification for our evaluation protocol in Section~\ref{sec-exper-2} where we adopt the graph-level splitting to construct training/validation/testing sets.

Another dataset is \texttt{Facebook-100} which consists of 100 Facebook friendship network snapshots from the year 2005, and each network contains nodes as Facebook users from a specific American university. We adopt fourteen networks in our experiments: John Hopkins, Caltech, Amherst, Bingham, Duke, Princeton, WashU, Brandeis, Carnegie, Cornell, Yale, Penn, Brown and Texas. Recall that in Section~\ref{sec-exper-2} we use Penn, Brown and Texas for testing, Cornell and Yale for validation, and use three different combinations from the remaining graphs for training. These graphs have significantly diverse sizes, densities and degree distributions. In Fig.~\ref{fig-fb100-stat} we present a comparison which indicates that the distributions of graph structures among these graphs are different. Concretely, the testing graphs Penn and Texas are much larger (with 41554 and 31560 nodes, respectively) than training/validation graphs (most with thousands of nodes). Also, the training graphs Caltech and Amherst are much denser than other graphs in the dataset, while some graphs like Penn have nodes with very large degrees. These statistics suggest that our evaluation protocol requires the model to handle different graph structures from training/validation to testing data.

\subsection{Temporal Evolution on Dynamic Graph Data}\label{appx-dataset-3}

\begin{figure}[t!]
\centering
\includegraphics[width=0.8\textwidth]{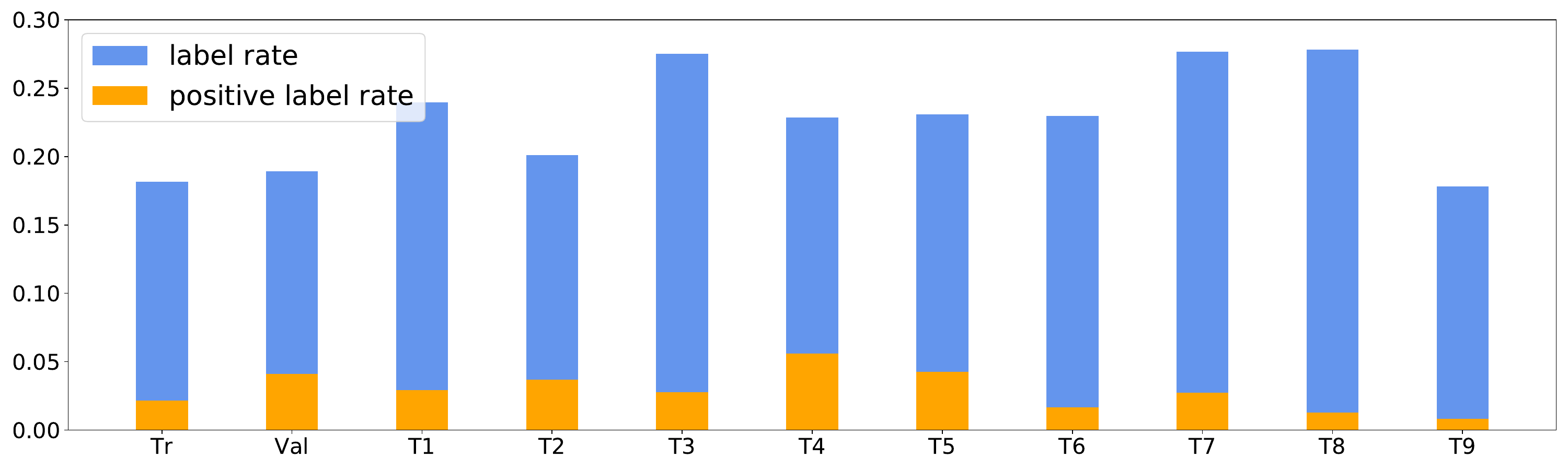}
\caption{The label rates and positive label rates of training/validation/testing data splits of \texttt{Elliptic}. The positive class (illicit transaction) and negative class (licit transaction) are very imbalanced. Also, in different splits, the distributions for labels exhibit clear differences.}
\label{fig-elliptic-stat}
\end{figure}

Another common scenario is for temporal graphs that dynamically evolve as time goes by. The types of evolution can be generally divided into two categories. In the first case, there are multiple graph snapshots and each snapshot is taken at one time. As time goes by, there exists a sequence of graph snapshots which may contain different node sets and data distributions. Typical examples include financial networks that record the payment flows among transactions within different time intervals. In the second case, there is one graph that evolves with node/edge adding or deleting. Typical examples include some large-scale real-world graphs like social networks and citation networks where the distribution for node features, edges and labels would have strong correlation with time (in different scales). We adopt two public real-world datasets \texttt{Elliptic} and \texttt{OGB-Arxiv} for node classification experiments. 

\texttt{Elliptic} contains a sequence of 49 graph snapshots. Each graph snapshot is a network of Bitcoin transactions where each node represents one transaction and each edge indicates a payment flow. Approximately 20\% of the transactions are marked with licit or illicit ones and the goal is to identify illicit transaction in the future observed network. Since in the original dataset, the first six snapshots have extremely imbalanced classes (where the illicit transactions are less than 10 among thousands of nodes), we remove them and use the 7th-11th/12th-17th/17th-49th snapshots for training/validation/testing. Also, due to the fact that each graph snapshot has very low positive label rate, we group the 33 testing graph snapshots into 9 test sets according to the chronological order. In Fig.~\ref{fig-elliptic-stat} we present the label rate and positive label rate for training/validation/testing sets. As we can see, the positive label rates are quite different in different data sets. Indeed, the model needs to handle distinct label distributions from training to testing data.

\begin{figure}[t!]
\centering
\includegraphics[width=0.9\textwidth]{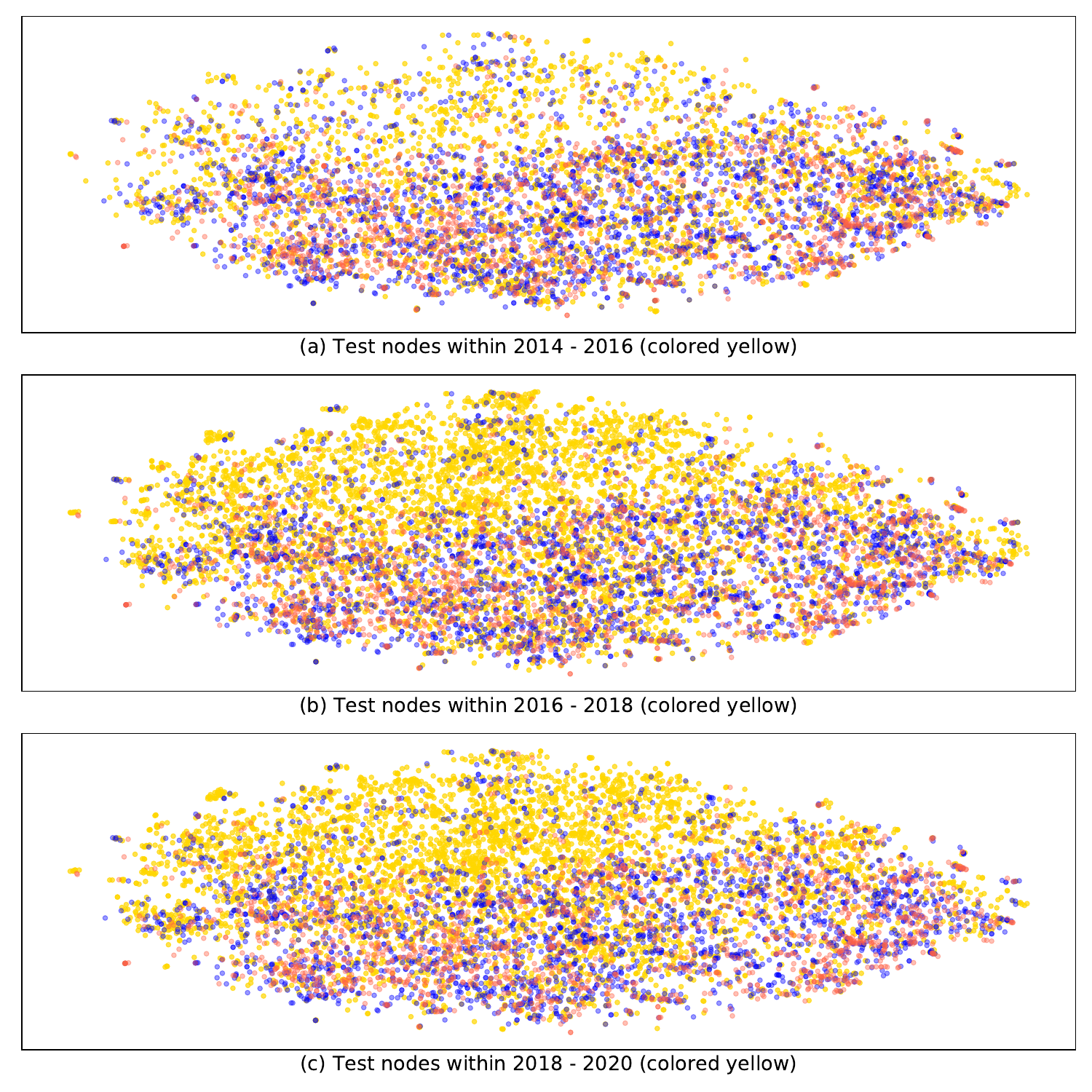}
\caption{\textsc{T-SNE} visualization of training/validation/testing nodes in \texttt{OGB-Arxiv}. We mark training nodes (within 1950-2011) and validation nodes (within 2011-2014) as red and blue, respectively. In (a)-(c), the test nodes within different time intervals are visualized as yellow points. We can see that as the time difference of testing data and training/validation data goes large from (a) to (c), the testing nodes non-overlapped with training/validation ones become more, which suggests that the distribution shifts become more significant and require the model to extrapolate to more difficult future data.}
\label{fig-arxiv-vis}
\end{figure}

\texttt{OGB-Arxiv} is composed of 169,343 Arxiv CS papers from 40 subject areas and their citation relationship. The goal is to predict a paper's subject area. In \citep{ogb}, the papers published before 2017, on 2018 and since 2019 are used for training/validation/testing. Also, the authors adopt the transductive learning setting, i.e., the nodes in validation and test sets also exist in the graph for training. In our case, we instead adopt inductive learning setting where the nodes in validation and test sets are unseen during training, which is more akin to the real-world situation. Besides, for better evaluation on generalization, especially extrapolating to new data, we consider dataset splits with a larger year gap: we use papers published before 2011 for training, from 2011 to 2014 for validation, and after 2014 for test. Such a dataset splitting way would introduce distribution shift between training and testing data, since several latent influential factors (e.g., the popularity of research topics) for data generation would change over time. In Fig.~\ref{fig-arxiv-vis}, we visualize the \textsc{T-SNE} embeddings of the nodes and mark the training/validation/testing nodes with different colors. From Fig.~\ref{fig-arxiv-vis}(a) to Fig.~\ref{fig-arxiv-vis}(c), we can see that testing nodes non-overlapped with the training/validation ones exhibit an increase, which suggests that the distribution shifts enlarge as time difference goes large. This phenomenon echoes the results we achieve in Table~\ref{tbl-ogb} where we observe that as the time difference between testing and training data goes larger, model performance suffers a clear drop, with \base suffering more than \model.

\section{Implementation Details}\label{appx-implement}

In this section, we present the details for our implementation in Section~\ref{sec-exp} including the model architectures, hyper-parameter settings and training details in order for reproducibility. Most of our experiments are run on GeForce RTX 2080Ti with 11GB except some experiments requiring large GPU memory for which we adopt RTX 8000 with 48GB. The configurations of our environments and packages are listed below:
\begin{itemize}
    \item Ubuntu 16.04
    \item CUDA 10.2
    \item PYTHON 3.7
    \item Numpy 1.20.3
    \item PyTorch 1.9.0
    \item PyTorch Geometric 1.7.2
\end{itemize}

\subsection{Model Architectures}

In our experiments in Section~\ref{sec-exp}, we adopt different GNN architectures as the backbone. Here we introduce the details for them.

\textbf{GCN.} We use the \texttt{GCNConv} available in Pytorch Geometric for implementation. The detailed architecture description is as below:
\begin{itemize}
    \item A sequence of $L$-layer \texttt{GCNConv}.
    \item Add self-loop and use batch normalization for graph convolution in each layer.
    \item Use ReLU as the activation.
\end{itemize}

\textbf{GAT.} We use the \texttt{GATConv} available in Pytorch Geometric for implementation. The detailed architecture description is as below:
\begin{itemize}
    \item A sequence of $L$-layer \texttt{GATConv} with head number $H$.
    \item Add self-loop and use batch normalization for graph convolution in each layer.
    \item Use ELU as the activation.
\end{itemize}

\textbf{GraphSAGE.} We use the \texttt{SAGEConv} available in Pytorch Geometric for implementation. The detailed architecture description is as below:
\begin{itemize}
    \item A sequence of $L$-layer \texttt{SAGEConv}.
    \item Add self-loop and use batch normalization for graph convolution in each layer.
    \item Use ReLU as the activation.
\end{itemize}

\textbf{GCNII.} We use the implementation\footnote{\url{https://github.com/chennnM/GCNII}} provided by the original paper \citep{GCNII}. The associated hyper-parameters in GCNII model are set as: $\alpha_{GCNII} = 0.1$ and $\lambda_{GCNII} = 1.0$.

\textbf{GPRGNN.} We use the implementation\footnote{\url{https://github.com/jianhao2016/GPRGNN}} provided by \cite{gprgnn}. We adopt the \texttt{PPR} initialization and \texttt{GPRprop} as the propagation unit. The associated hyper-parameters in GPRGNN model are set as: $\alpha_{GPRGNN} = 0.1$.

\subsection{Hyper Parameter Settings}

The hyper-parameters for model architectures are set as default values in different cases. Other hyper-parameters are searched with grid search on validation dataset. The searching space are as follows: learning rate for GNN backbone $\alpha_f\in \{0.0001, 0.0002, 0.001, 0.005, 0.01\}$, learning rate for graph editers $\alpha_g\in \{0.0001, 0.001, 0.005, 0.01\}$, weight for combination $\beta\in \{0.2, 0.5, 1.0, 2.0, 3.0\}$, number of edge editing for each node $s\in \{1, 5, 10\}$, number of iterations for inner update before one-step outer update $T\in \{1, 5\}$. 

\subsubsection{Settings for Section~\ref{sec-exper-1}}

We consider 2-layer GCN with hidden size 32. We use weight decay with coefficient set as 1e-3. Besides, we set $\alpha_g = 0.005$, $\alpha_f = 0.01$, $\beta=2.0$, $s=5$, $T=1$.

\subsubsection{Settings for Section~\ref{sec-exper-2}}

For GCN, we set the layer number $L$ as 2. For GAT, we set $L=2$ and $H=4$. For GCNII, we set the layer number as 10. We use hidden size 32 and weight decay with coefficient set as 1e-3.

For \texttt{Twitch-Explicit}, other hyper-parameters are set as follows:
\begin{itemize}
    \item GCN: $\alpha_g = 0.001$, $\alpha_f = 0.01$, $\beta=3.0$, $s=5$, $T=1$.
    \item GAT: $\alpha_g = 0.005$, $\alpha_f = 0.01$, $\beta=1.0$, $s=5$, $T=1$.
    \item GCNII: $\alpha_g = 0.01$, $\alpha_f = 0.001$, $\beta=1.0$, $s=5$, $T=1$.
\end{itemize}

For \texttt{Facebook-100}, other hyper-parameters are set as: $\alpha_g = 0.005$, $\alpha_f = 0.01$, $\beta=1.0$, $s=5$, $T=1$.

\subsubsection{Settings for Section~\ref{sec-exper-3}}

For GraphSAGE and GPRGNN, we set the layer number as 5 and hidden size as 32. 

For \texttt{Elliptic}, other hyper-parameters are set as follows:
\begin{itemize}
    \item GraphSAGE: $\alpha_g = 0.0001$, $\alpha_f = 0.0002$, $\beta=1.0$, $s=5$, $T=1$.
    \item GPRGNN: $\alpha_g = 0.005$, $\alpha_f = 0.01$, $\beta=1.0$, $s=5$, $T=1$.
\end{itemize}

For \texttt{OGB-Arxiv}, other hyper-parameters are set as follows:
\begin{itemize}
    \item GraphSAGE: $\alpha_g = 0.01$, $\alpha_f = 0.005$, $\beta=0.5$, $s=1$, $T=5$.
    \item GPRGNN: $\alpha_g = 0.001$, $\alpha_f = 0.01$, $\beta=1.0$, $s=1$, $T=5$.
\end{itemize}

\subsection{Training Details}

For each method, we train the model with a fixed number of epochs and report the test result achieved at the epoch when the model provides the best performance on validation set.

\section{More Experiment Results}\label{appx-result}

We provide additional experiment results in this section. In Fig.\ref{fig-cora-extra-sgc-1} and ~\ref{fig-cora-extra-gat-1} we present the distribution of test accuracy on \texttt{Cora} when using SGC and GAT, respectively, as the GNNs for data generation. In Fig.~\ref{fig-cora-extra-sgc-2} and \ref{fig-cora-extra-gat-2} we further compare with the training accuracy using all the features and removing the spurious ones for inference. These results are consistent with those presented in Section~\ref{sec-exper-1}, which again verifies the effectiveness of our approach. Besides, the corresponding extra results on \texttt{Photo} are shown in Fig.~\ref{fig-photo-extra-sgc-1}, \ref{fig-photo-extra-gat-1}, \ref{fig-photo-extra-sgc-2} and \ref{fig-photo-extra-gat-2}, which also back up our discussions in Section~\ref{sec-exper-1}.

\clearpage

\begin{figure}[t!]
\centering
\includegraphics[width=\textwidth]{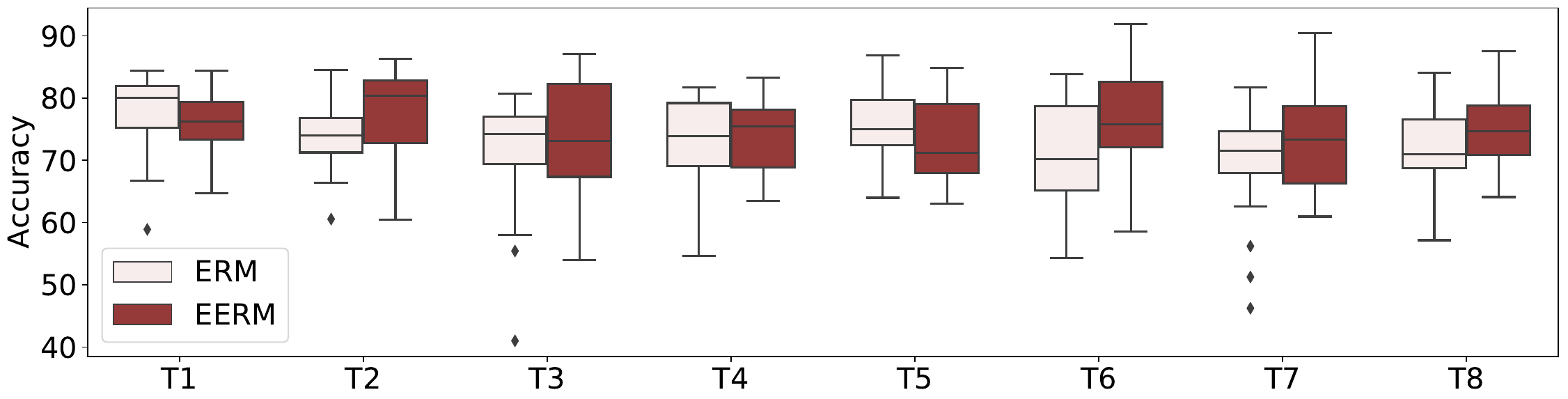}
\caption{Distribution of test accuracy results on \texttt{Cora} with artificial distribution shifts generated by SGC as the GNN generator.}
\label{fig-cora-extra-sgc-1}
\end{figure}

\begin{figure}[t!]
\centering
\includegraphics[width=\textwidth]{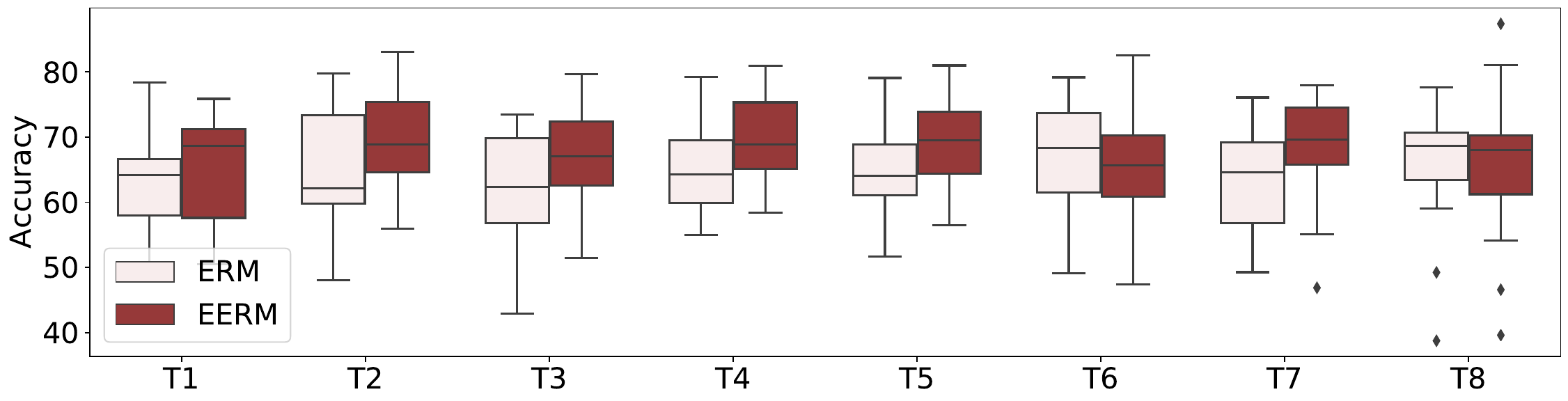}
\caption{Distribution of test accuracy results on \texttt{Cora} with artificial distribution shifts generated by GAT as the GNN generator.}
\label{fig-cora-extra-gat-1}
\end{figure}

\clearpage

\begin{figure}[t!]
\centering
\includegraphics[width=0.7\textwidth]{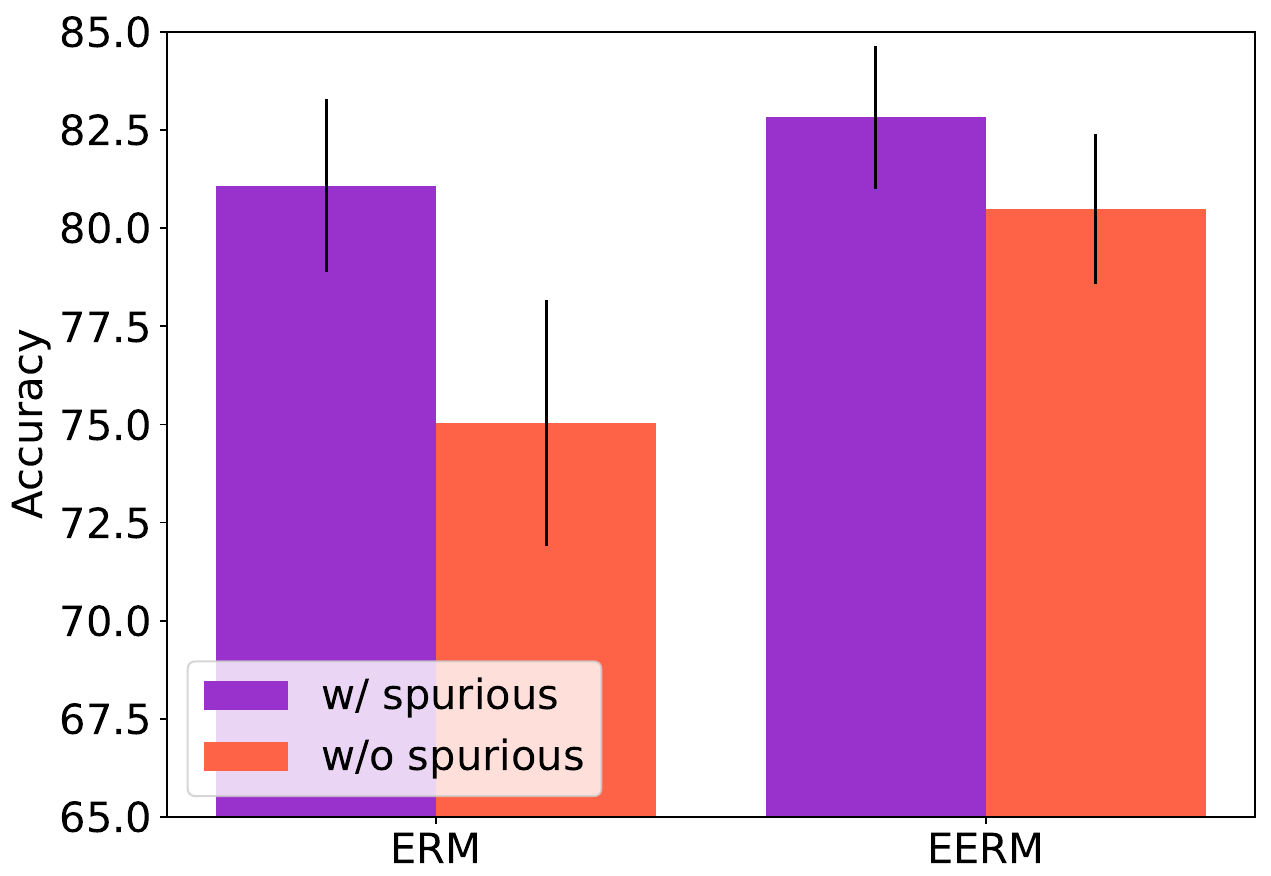}
\caption{Comparison of training accuracy using all the features v.s. removing the spurious features for inference on \texttt{Cora} with artificial distribution shifts generated by SGC as the GNN generator.}
\label{fig-cora-extra-sgc-2}
\end{figure}

\begin{figure}[t!]
\centering
\includegraphics[width=0.7\textwidth]{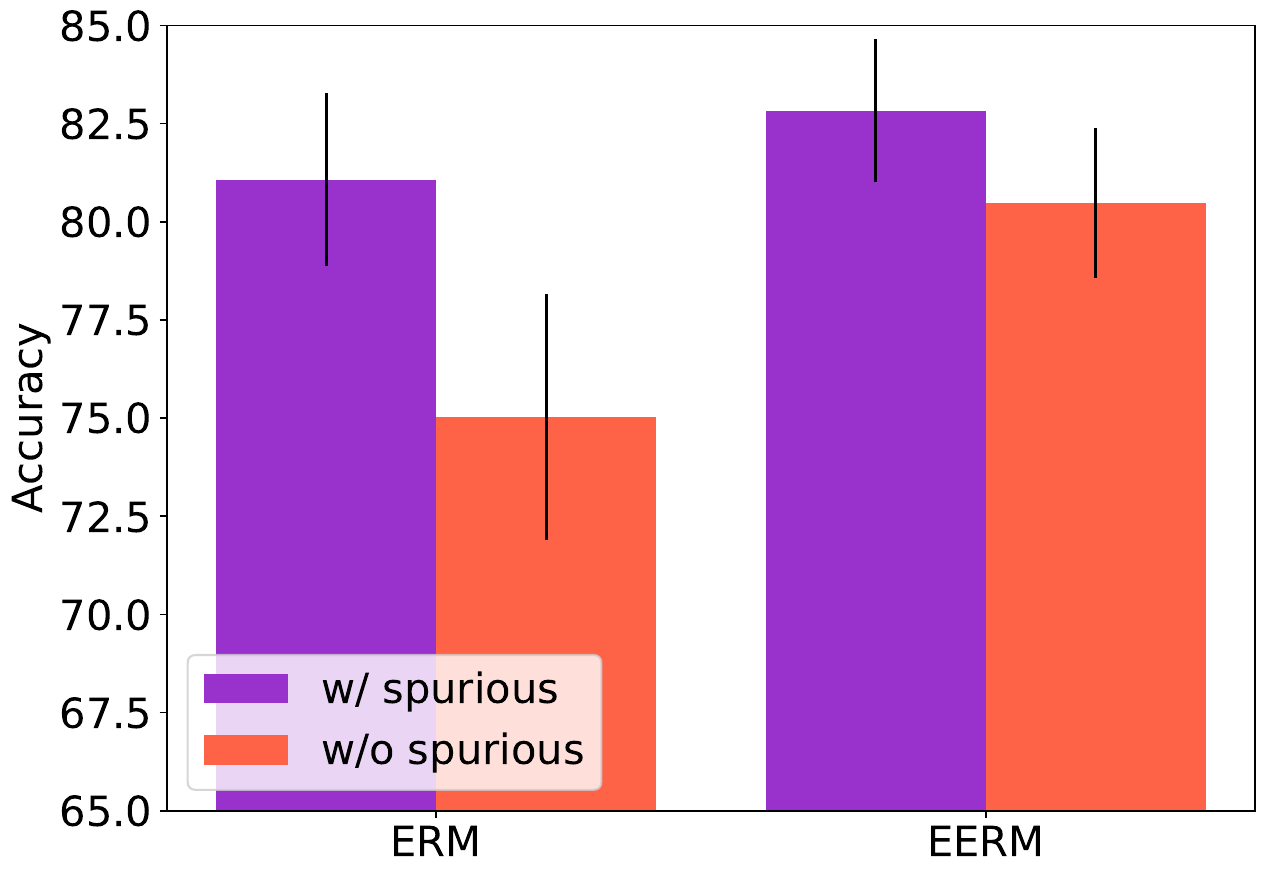}
\caption{Comparison of training accuracy using all the features v.s. removing the spurious features for inference on \texttt{Cora} with artificial distribution shifts generated by GAT as the GNN generator.}
\label{fig-cora-extra-gat-2}
\end{figure}

\clearpage

\begin{figure}[t!]
\centering
\includegraphics[width=\textwidth]{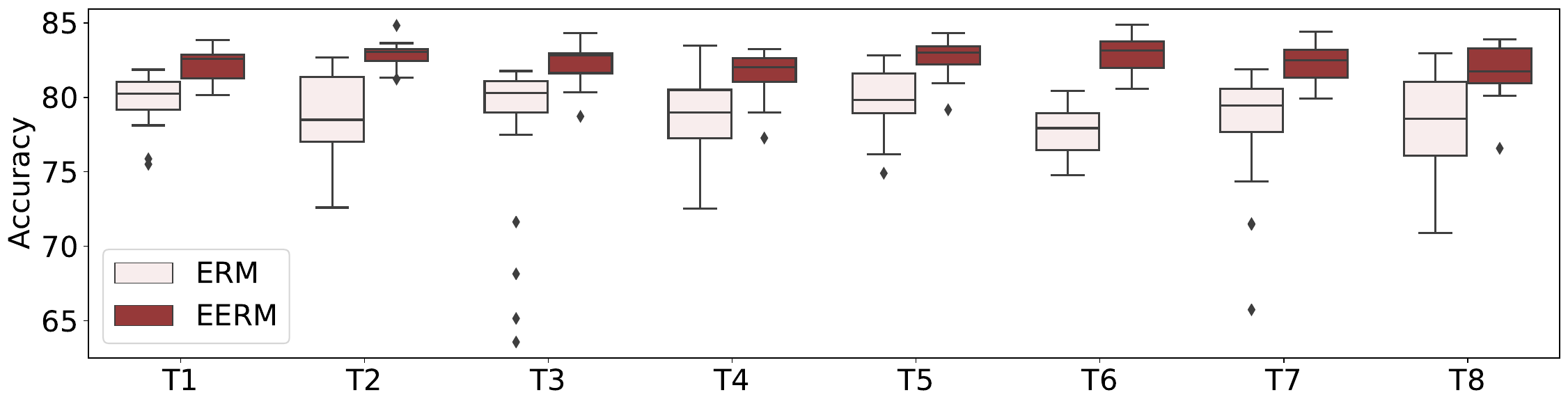}
\caption{Distribution of test accuracy results on \texttt{Photo} with artificial distribution shifts generated by SGC as the GNN generator.}
\label{fig-photo-extra-sgc-1}
\end{figure}

\begin{figure}[t!]
\centering
\includegraphics[width=\textwidth]{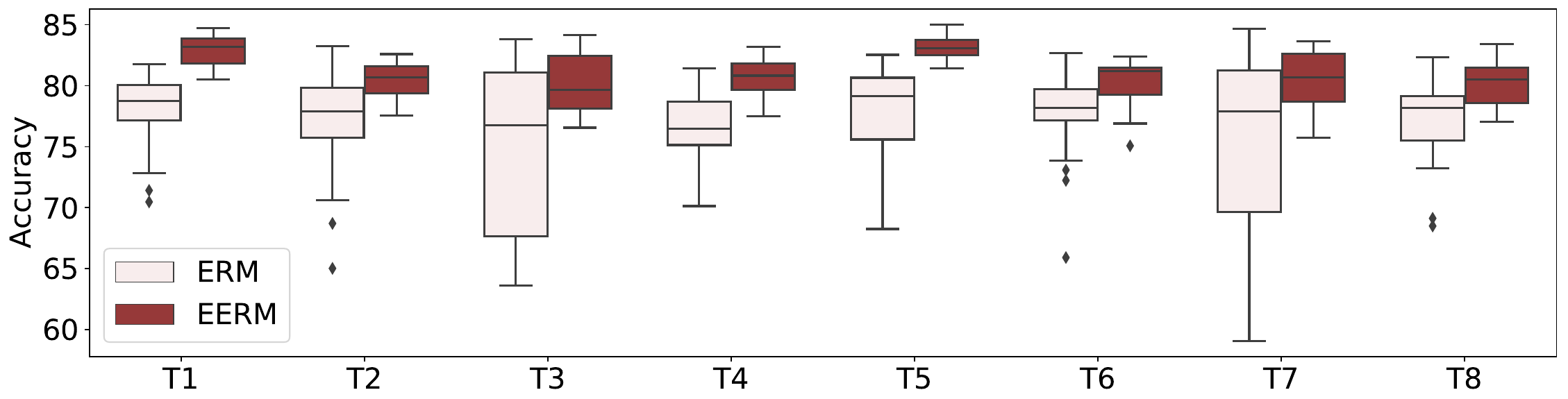}
\caption{Distribution of test accuracy results on \texttt{Photo} with artificial distribution shifts generated by GAT as the GNN generator.}
\label{fig-photo-extra-gat-1}
\end{figure}

\clearpage

\begin{figure}[t!]
\centering
\includegraphics[width=0.7\textwidth]{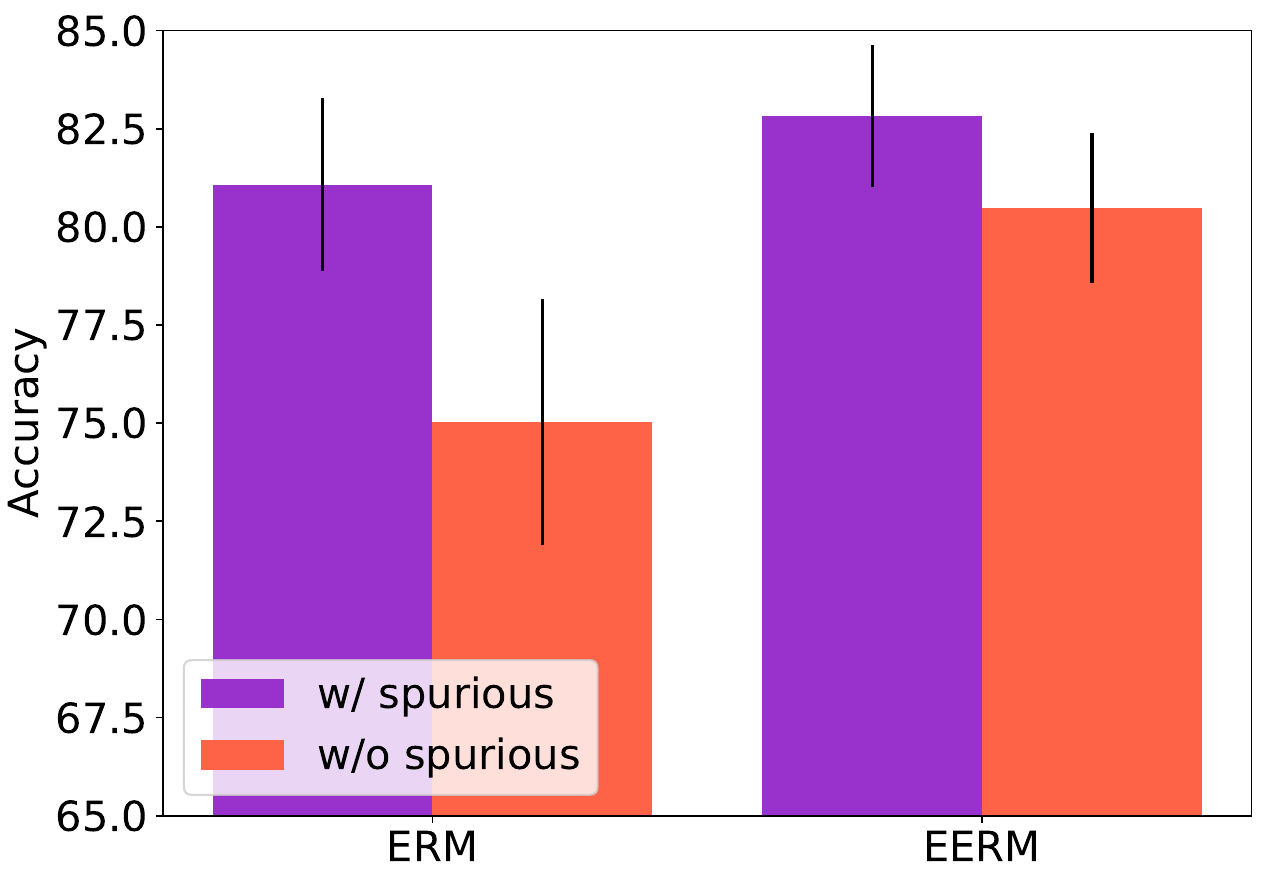}
\caption{Comparison of training accuracy using all the features v.s. removing the spurious features for inference on \texttt{Photo} with artificial distribution shifts generated by SGC as the GNN generator.}
\label{fig-photo-extra-sgc-2}
\end{figure}

\begin{figure}[t!]
\centering
\includegraphics[width=0.7\textwidth]{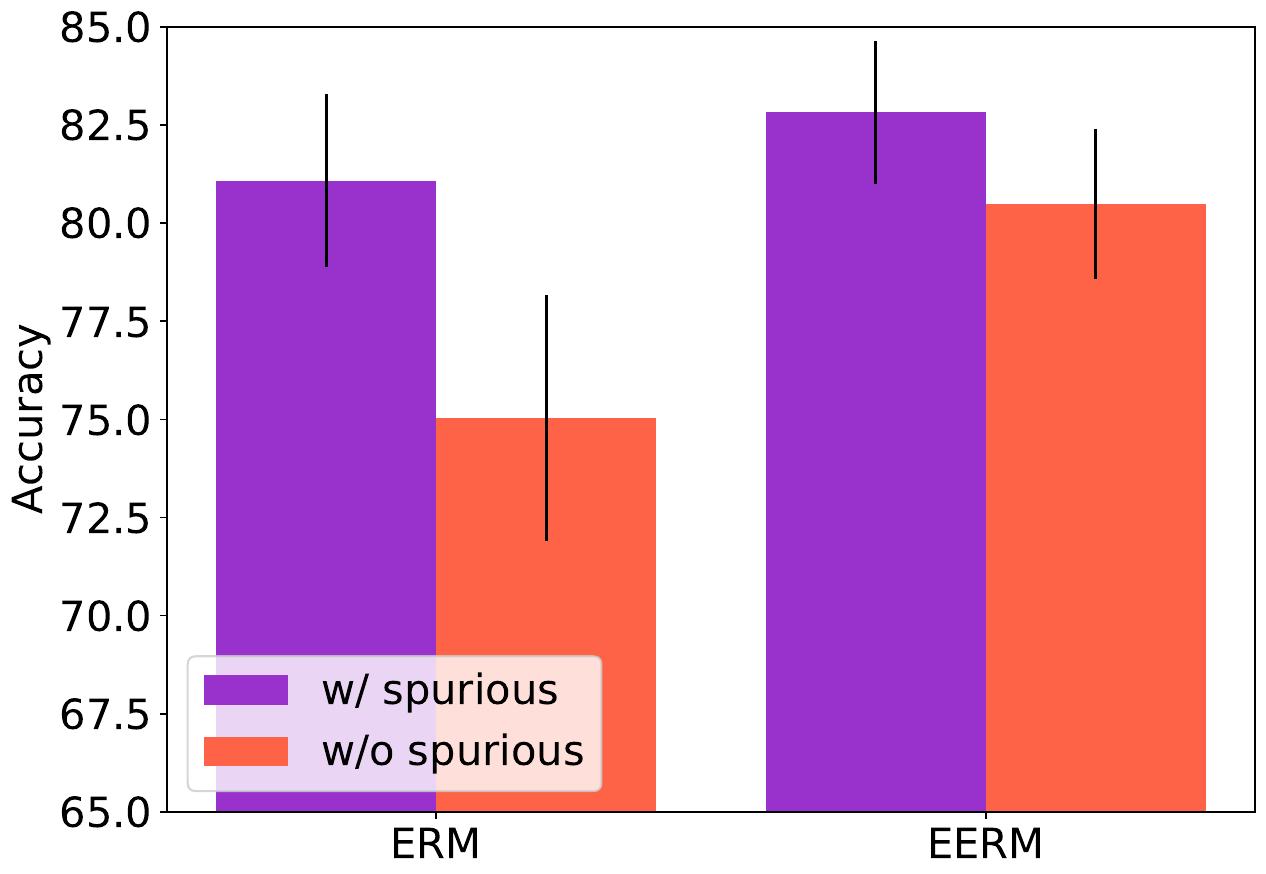}
\caption{Comparison of training accuracy using all the features v.s. removing the spurious features for inference on \texttt{Photo} with artificial distribution shifts generated by GAT as the GNN generator.}
\label{fig-photo-extra-gat-2}
\end{figure}

\end{document}